\title{Surrogate regret bounds for generalized classification performance metrics\thanks{W. Kot{\l}owski has been supported by the Polish National Science Centre under grant no. 2013/11/D/ST6/03050. K. Dembczy{\'n}ski has been supported by the Polish National Science Centre under grant no. 2013/09/D/ST6/03917.}
}
\author{Wojciech Kot{\l}owski \\ \small{Poznan University of Technology, Poland} \and Krzysztof Dembczy{\'n}ski \\ \small{Poznan University of Technology, Poland}}
\newtheorem{theorem}{Theorem}
\newtheorem{lemma}[theorem]{Lemma}
\newtheorem{proposition}[theorem]{Proposition}
\theoremstyle{definition}
\newcommand{\assert}[1]{\llbracket #1 \rrbracket}
\newcommand{\sgn}{\mathrm{sgn}}
\newcommand{\regret}{\mathrm{Reg}}
\newcommand{\regretc}{\mathrm{reg}}
\newcommand{\risk}{\mathrm{Risk}}
\newcommand{\riskc}{\mathrm{risk}}
\newcommand{\TP}{\mathrm{TP}}
\newcommand{\FP}{\mathrm{FP}}
\newcommand{\FN}{\mathrm{FN}}
\newcommand{\TN}{\mathrm{TN}}
\newcommand{\vecy}{\boldsymbol{y}}
\newcommand{\vech}{\boldsymbol{h}}
\newcommand{\argmin}{\operatornamewithlimits{argmin}}
\newcommand{\argmax}{\operatornamewithlimits{argmax}}
\begin{document}

\maketitle

\begin{abstract} 
We consider optimization of generalized performance metrics for binary classification
by means of surrogate losses.  We focus on a class of metrics, which are linear-fractional functions
of the false positive and false negative rates (examples of which include $F_{\beta}$-measure,
Jaccard similarity coefficient, AM measure, and many others).
Our analysis concerns the following two-step procedure. First, a real-valued function
$f$ is learned by minimizing a surrogate loss for binary classification on the training sample.
It is assumed that the surrogate loss
is a strongly proper composite loss function (examples of which include
logistic loss, squared-error loss, exponential loss, etc.). Then, given $f$, a threshold $\widehat{\theta}$ is
tuned on a separate validation sample, by direct optimization of the target performance metric.
We show that the regret of the resulting classifier (obtained from thresholding $f$ on $\widehat{\theta}$) 
measured with respect to the target metric is upperbounded by the regret of $f$ measured with respect
to the surrogate loss. 
We also extend our results to cover multilabel classification and provide regret bounds
for micro- and macro-averaging measures. 
Our findings are further analyzed
in~a~computational study on both synthetic and real data~sets.
\end{abstract}

\section{Introduction}
\label{sec:introduction}

In binary classification, misclassification error is 
not necessarily an adequate evaluation metric,
and one often resorts to more complex metrics, 
better suited for the problem. For instance,
when classes are imbalanced, 
$F_{\beta}$-measure \citep{lewis95,Jansche_2005,Ye2012}
and AM measure (balanced error rate) \citep{Menon_etal2013} are frequently
used. 
Optimizing such generalized performance
metrics poses computational and statistical challenges,
as they cannot be decomposed into losses on individual
observations. 

In this paper, 
we consider optimization of generalized performance metrics
by means of surrogate losses. 
We restrict our attention to a family of performance metrics
which are ratios of linear functions of false positives (FP) 
and false negatives (FN). Such functions are called
linear-fractional, and include the aforementioned
$F_{\beta}$ and AM measures, as well as
Jaccard similarity coefficient, weighted accuracy, and many others \citep{Natarajan_etal2014,Koyejo15}.
We focus on the most popular
approach to optimizing generalized performance
metrics in practice, based on the following
two-step procedure. First, a real-valued function
$f$ is learned by minimizing a surrogate loss for 
binary classification on the training sample.
Then, given $f$, a threshold $\widehat{\theta}$ is
tuned on a separate validation sample, 
by direct optimization of the target performance measure
with respect to a classifier obtained 
from $f$ by thresholding at $\widehat{\theta}$, classifying all observations with value of $f$ above the threshold
as positive class, and all observations below the threshold as negative class.
This approach can be motivated by the asymptotic analysis:
minimization of appropriate surrogate loss
results in estimation of conditional (``posterior'') class probabilities,
and many performance metrics are maximized by a classifier
which predicts by thresholding on the scale 
of conditional probabilities \citep{Ye2012,Zhao_etal_2013,Natarajan_etal2014}.
However, it is unclear what can be said about the behavior of this procedure on finite samples.

In this work, we are interested in theoretical analysis and justification
of this approach for any sample size, and for
any, not necessarily perfect, classification function.
To this end, we use the notion of \emph{regret} with respect
to some evaluation metric, which is a difference between the performance of a given classifier
and the performance of the optimal classifier with respect to this metric.
We show that the regret of the resulting classifier (obtained from thresholding $f$ on $\widehat{\theta}$) 
measured with respect to the target metric is upperbounded by the regret of $f$ measured with respect
to the surrogate loss. Our result holds for any surrogate loss function, which is
\emph{strongly proper composite} \citep{Agarwal2014}, examples of which include
logistic loss, squared-error loss, exponential loss, etc.
Interestingly, the proof of our result goes by an intermediate bound of the regret with respect to
the target measure by
a cost-sensitive classification regret.
As a byproduct, we get a bound on the cost-sensitive classification regret 
by a surrogate regret of a real-valued function which holds \emph{simultaneously} for \emph{all} misclassification costs: 
the misclassification costs only influence the threshold, but not: the function, the surrogate loss, or the regret bound. 

We further extend our results to cover multilabel classification, in which the goal is to simultaneously
predict multiple labels for each object. We consider two methods of generalizing binary classification
performance metrics to the multilabel setting: 
the macro-averaging and the micro-averaging \citep{retrieval,Puthiya_etal2014,Koyejo15}. The macro-averaging is based on
first computing the performance metric separately for each label, and then
averaging the metrics over the labels. In the micro-averaging, the false positives
and false negatives for each label are first averaged over the labels, and then the performance metric is calculated on these averaged
quantities.
We show that our regret bounds hold for both macro- and micro-averaging measures. Interestingly, for micro averaging,
only a single threshold needs to be tuned and is shared among all labels.

Our finding is further analyzed in a computational study on both synthetic and real data sets. We compare
the performance of the algorithm when used with two types of surrogate losses: the logistic loss (which is strongly proper)
and the hinge loss (which is not a proper loss).
On synthetic data sets, we analyze the behavior of the algorithm for discrete feature distribution (where nonparametric
classifiers are used), and for continuous feature distribution (where linear classifiers are used).
 Next, we look at the performance of the algorithm on the real-life benchmark data sets, both for binary and multilabel classification.

We note that the goal of this paper is not to propose a new learning algorithm,
but rather to provide a deeper statistical understanding of an existing method. 
The two-step procedure analyzed here (also known as the plug-in method in the case when the outcomes of the function have a probabilistic interpretation),
is commonly used in the binary classification with generalized performance metrics, but 
this is exactly the reason why we think it is important to study this method in more depth from a theoretical point of view. 

\subsection{Related work}

In machine learning, numerous attempts to optimize generalized performance metrics
have been proposed. They can be divided
into two general categories. The structured loss approaches
\citep{Musicant2012,TsochantaridisJHA05,PettersonSMLL,PettersonRML}
rely on incorporating the performance metric into the
training process, thus requiring specialized learning algorithms
to optimize non-standard objectives. On the other hand, the plug-in approaches, 
which are very closely related to the topic of this work,
are based on obtaining
reliable class conditional probability estimates by employing
standard algorithms minimizing some surrogate
loss for binary classification (such as logistic loss used in logistic regression, exponential loss used in boosting, etc.),
and then plugging these estimates into the functional form of
the optimal prediction rule for a given performance metric
\citep{Jansche_2007,Ye2012,Dembczynski_et_al_2013,Waegeman2013,Narasimhan_etal2014,Narasimhan15,Natarajan_etal2014,Koyejo15}.

Existing theoretical work on generalized performance metrics
is mainly concerned with \emph{statistical consistency} also
known as \emph{calibration}, which determines whether convergence to the minimizer of a 
surrogate loss implies convergence to the minimizer of the task performance measure as the sample size
goes to infinity \citep{Dembczynski_et_al_2010a,Ye2012,Gao_Zhou_2013,Zhao_etal_2013,Narasimhan_etal2014,Natarajan_etal2014,Koyejo15}.
Here we give a stronger result which bounds the regret with respect
to the performance metric by the regret with respect to the surrogate loss. Our result is valid for all
finite sample sizes and informs about the rates of convergence.

We also note that two distinct frameworks are used to study the statistical consistency of classifiers
with respect to performance metrics: Decision Theoretic Analysis (DTA), which assumes a test set of a fixed size,
and Empirical Utility Maximization (EUM), in which the metric is defined by means
of population quantities \citep{Ye2012}. In this context, our work falls into the EUM framework.

\citet{Puthiya_etal2014} presented an alternative approach to 
maximizing linear-fractional metrics
by learning a sequence of binary classification problems 
with varying misclassification costs. 
While we were inspired by
their theoretical analysis,
their 
approach is, however, more complicated than 
the two-step approach analyzed here, which requires
solving an ordinary binary classification problem only once.
Moreover, as part of our proof, we show 
that by minimizing a strongly proper composite loss, we are \emph{implicitly}
minimizing cost-sensitive classification error for any
misclassification costs without any overhead. Hence,
the costs need not be known during
learning, and can only be determined later
on a separate validation sample
by optimizing the threshold.
\citet{Narasimhan15} developed a general framework for designing provably consistent algorithms for complex multiclass
performance measures. They relate the regret with respect to the target metric to the conditional probability
estimation error measured in terms of $L_1$-metric. Their algorithms rely on using accurate class conditional probability estimates and
multiple solving cost-sensitive multiclass classification problems.

The generalized performance metrics for binary classification are employed in the multilabel setting by means
of one of the three averaging schemes \citep{Waegeman2013,Puthiya_etal2014,Koyejo15}: 
instance-averaging (averaging errors over the labels, averaging metric over the examples),
macro-averaging (averaging errors over the examples, averaging metric over the labels),
and micro-averaging (averaging errors over the examples and the labels).
\citet{Koyejo15} characterize the optimal classifiers for multilabel
metrics 
and prove the consistency of the plug-in method. Our regret bounds for multilabel
classification can be seen as a follow up on their work.

\subsection{Outline} The paper is organized as follows.
In Section \ref{sec:problem_setting} 
we introduce basic
concepts, definitions and notation. The main result
is presented in Section \ref{sec:main_result} and
proved in Section \ref{sec:proof}. Section \ref{sec:multilabel}
extends our results to the multilabel setting. The theoretical
contribution of the paper is complemented by 
computational experiments in Section \ref{sec:experiment},
prior to concluding with a summary
in Section \ref{sec:summary}.

\section{Problem setting}
\label{sec:problem_setting}

\subsection{Binary classification}

In binary classification, the goal is, given an input (feature vector) $x \in X$,
to accurately predict the output (label) $y \in \{-1,1\}$. We assume input-output
pairs $(x,y)$ are generated i.i.d. according to $\Pr(x,y)$.
A \emph{classifier} is a mapping  $h \colon X \to \{-1,1\}$. Given $h$, we define
the following four quantities:
\begin{align*}
\TP(h) &= \Pr(h(x) = 1 \land y=1), \\
\FP(h) &= \Pr(h(x) = 1 \land y=-1),\\
\TN(h) &= \Pr(h(x) = -1 \land y=-1),\\
\FN(h) &= \Pr(h(x) = -1 \land y=1),
\end{align*}
which are known as \emph{true positives}, \emph{false positives}, \emph{true negatives} and 
\emph{false negatives}, respectively. We also denote $\Pr(y=1)$ by $P$.
Note that for any $h$, $\FP(h) + \TN(h) = \Pr(y=-1) = 1-P$ and
$\TP(h) + \FN(h) = P$, so out of the four quantities above, only two are independent. 
In this paper, we use the convention to parameterize all metrics by means of $\FP(h)$ and $\FN(h)$.

We call a two-argument function $\Psi = \Psi(\FP,\FN)$ a 
\emph{(generalized) classification performance metric}. 
Given a classifier $h$, we define $\Psi(h) = \Psi(\FP(h),\FN(h))$. 
Throughout the paper we assume that $\Psi(\FP,\FN)$ is \emph{linear-fractional},
i.e., is a ratio of linear functions:
\begin{equation}
\label{eq:pseudo_linear_function}
 \Psi(\FP,\FN) = \frac{a_0 + a_1 \FP + a_2 \FN}{b_0 + b_1 \FP + b_2 \FN},
\end{equation}
where we allow coefficients $a_i,b_i$ to depend on the distribution $\Pr(x,y)$.
Note, that our convention to parameterize the metric by means of $(\FP,\FN)$
does not affect definition (\ref{eq:pseudo_linear_function}), because $\Psi$
can be reparameterized to $(\FP,\TN)$, $(\TP,\FN)$, or $(\TP,\TN)$, and
will remain linear-fractional in all these parameterizations. We also assume
$\Psi(\FP,\FN)$ is non-increasing in $\FP$ and $\FN$, a property
that is inherently possessed by virtually all performance measures used in practice.
Table \ref{tbl:performance_metrics} lists some popular examples of 
linear-fractional performance metrics.

\begin{table}
\begin{center}
\begin{tabular}{l@{~~~~~~~}l}
\toprule
metric & expression \\
\midrule
Accuracy & $\mathrm{Acc} = 1 - \FN - \FP$ \\[3mm]
$F_{\beta}$-measure & $F_{\beta} = \frac{(1+\beta^2) (P - \FN)}{(1+\beta^2)P - \FN + \FP}$\\[3mm]
Jaccard similarity & $J = \frac{P - \FN}{P + \FP}$\\[3mm]
AM measure & $\mathrm{AM} = \frac{2P(1-P) - P \,\FP - (1-P)\FN}{2P(1-P)}$\\[3mm]
Weighted accuracy & $\mathrm{WA} = \frac{w_1(1-P) + w_2 P - w_1 \FP - w_2 \FN}{w_1(1-P) + w_2 P}$\\[3mm]
\bottomrule
\end{tabular}
\end{center}
\caption{Some popular linear-fractional performance metrics expressed as functions of $\FN$ and $\FP$. 
See \citep{Natarajan_etal2014}
for a more detailed description.}
\label{tbl:performance_metrics}
\end{table}

Let $h^*_{\Psi}$ be the maximizer of $\Psi(h)$ over all classifiers:
\[
 h^*_{\Psi} = \argmax_{h \colon X \to \{-1,1\}} \Psi(h)
\]
(if $\argmax$ is not unique, we take $h^*_{\Psi}$ to be any maximizer of $\Psi$).
Given any classifier $h$, we define its \emph{$\Psi$-regret} as:
\[
\regret_{\Psi}(h) = \Psi(h^*_{\Psi}) - \Psi(h). 
\]
The $\Psi$-regret is nonnegative from the definition, and quantifies the suboptimality of $h$, i.e., how much worse is $h$ comparing to the optimal $h^*_{\Psi}$.

\subsection{Strongly proper composite losses}

Here we briefly outline the theory of strongly proper composite loss functions.
See \citep{Agarwal2014} for a more detailed description.

Define a \emph{binary class probability estimation (CPE) loss function} 
\citep{Reid_Williamson_2010,Reid_Williamson_2011} as a function
$c \colon \{-1,1\} \times [0,1] \to \mathbb{R}_+$, where $c(y,\widehat{\eta})$
assigns penalty to prediction $\widehat{\eta}$, when the observed label is $y$.
Define the \emph{conditional $c$-risk} as:\footnote{Throughout the paper,
we follow the convention that all conditional
quantities are lowercase (regret, risk), while all unconditional quantities
are uppercase (Regret, Risk).}
\[
 \riskc_c(\eta,\widehat{\eta}) = \eta c(1,\widehat{\eta}) + (1-\eta) c(-1,\widehat{\eta}),
\]
the expected loss of prediction $\widehat{\eta}$ when the label is drawn from a distribution
with $\Pr(y=1) = \eta$. We say CPE loss is \emph{proper} if for any $\eta \in [0,1]$,
$\eta \in \argmin_{\widehat{\eta} \in [0,1]} \riskc_c(\eta,\widehat{\eta})$. In other words,
proper losses are minimized by taking the true class probability distribution as a prediction;
hence $\widehat{\eta}$ can be interpreted as probability estimate of $\eta$. Define the
\emph{conditional $c$-regret} as:
\begin{align*}
 \regretc_{c}(\eta,\widehat{\eta})
 &= \riskc_c(\eta,\widehat{\eta}) - \inf_{\widehat{\eta}'} \riskc_c(\eta,\widehat{\eta}') \\
 &= \riskc_c(\eta,\widehat{\eta}) - \riskc_{c}(\eta,\eta),
\end{align*}
the difference between the conditional $c$-risk of $\widehat{\eta}$ and the
optimal $c$-risk. We say a CPE loss $c$ is \emph{$\lambda$-strongly proper}
if for any $\eta, \widehat{\eta}$:
\[
 \regretc_{c}(\eta,\widehat{\eta}) \geq \frac{\lambda}{2} (\eta - \widehat{\eta})^2,
\]
i.e. the conditional $c$-regret is everywhere lowerbounded by a squared difference
of its arguments. It can be shown \citep{Agarwal2014} that under mild regularity assumption
a proper CPE loss $c$ is $\lambda$-strongly
proper if and only if the function $H_{c}(\eta) := \riskc_{c}(\eta,\eta)$ is $\lambda$-strongly
concave. This fact lets us easily verify whether a given loss function is $\lambda$-strongly proper.

It is often more convenient to reparameterize the loss function from $\widehat{\eta} \in [0,1]$
to a real-valued $f \in \mathbb{R}$ through a strictly increasing (and therefore invertible)
\emph{link function} $\psi \colon [0,1] \to \mathbb{R}$:
\[
\ell(y,f) = c\big(y,\psi^{-1}(f)\big).
\]
If $c$ is $\lambda$-strongly proper, we call function
$\ell \colon \{-1,1\} \times \mathbb{R} \to \mathbb{R}_+$ 
\emph{$\lambda$-strongly proper composite loss function}.
The notions of conditional $\ell$-risk $\riskc_\ell(\eta,f)$ 
and conditional $\ell$-regret $\regretc_{\ell}(\eta,f)$ extend naturally
to the case of composite losses:
\begin{align*}
 \riskc_\ell(\eta,f) &= \eta \ell(1,f) + (1-\eta) \ell(-1,f) \\[2mm]
 \regretc_{\ell}(\eta,f) &= \riskc_\ell(\eta,f) - \inf_{f' \in \mathbb{R}} \riskc_\ell(\eta,f')\\
 &= \riskc_\ell(\eta,f) - \riskc_\ell(\eta,\psi(\eta)).
\end{align*}
and the strong properness of underlying CPE loss implies:
\begin{equation}
 \regretc_{\ell}(\eta,f) \geq \frac{\lambda}{2} \left(\eta - \psi^{-1}(f) \right)^2
 \label{eq:strong_properness_of_ell}
\end{equation}
As an example, consider a \emph{logarithmic scoring rule}:
\[
c(y,\widehat{\eta}) = -\assert{y=1} \log \widehat{\eta} - \assert{y=-1} \log (1- \widehat{\eta}), 
\]
where $\assert{Q}$ is the indicator function, equal to $1$ if $Q$ holds, and to $0$ otherwise.
Its conditional risk is given by:
\[
\riskc_c(\eta,\widehat{\eta}) = -\eta \log \widehat{\eta} - (1-\eta) \log (1- \widehat{\eta}),
\]
the \emph{cross-entropy} between $\eta$ and $\widehat{\eta}$. The conditional $c$-regret
is the binary \emph{Kullback-Leibler divergence} between $\eta$ and $\widehat{\eta}$:
\[
 \regretc_{c}(\eta,\widehat{\eta}) = \eta \log \frac{\eta}{\widehat{\eta}} + (1-\eta) \log \frac{1-\eta}{1- \widehat{\eta}}.
\]
Note that since $H(\eta) = \riskc_c(\eta,\eta)$ is the binary entropy function, 
and $\big| \frac{\dif^2 H}{\dif \eta^2} \big| = \frac{1}{\eta(1-\eta)} \geq 4$,
$c$ is $4$-strongly proper loss.
Using the \emph{logit} link function $\psi(\widehat{\eta}) = \log \frac{\widehat{\eta}}{1-\widehat{\eta}}$,
we end up with the logistic loss function:
\[
 \ell(y,f) = \log\left(1 + e^{-yf} \right),
\]
which is $4$-strongly proper composite from the definition.

\begin{table}
\begin{center}
\begin{tabular}{c|ccc}
\toprule
loss function & squared-error & logistic & exponential \\[3mm]
$\ell(y,f)$ & $(y-f)^2$ & $\log\left(1+e^{-fy}\right)$ & $e^{-yf}$ \\[3mm]
$c(1,\widehat{\eta})$ & $4(1-\widehat{\eta})^2$ & $-\log \widehat{\eta}$ & $\sqrt{\frac{1-\widehat{\eta}}{\widehat{\eta}}}$ \\[3mm]
$c(-1,\widehat{\eta})$ & $4\widehat{\eta}^2$ & $-\log (1-\widehat{\eta})$ & $\sqrt{\frac{\widehat{\eta}}{1-\widehat{\eta}}}$ \\[3mm]
$\psi(\widehat{\eta})$ & $2 \widehat{\eta}-1$ & $\log \frac{\widehat{\eta}}{1-\widehat{\eta}}$ & $\frac{1}{2} \log \frac{\widehat{\eta}}{1-\widehat{\eta}}$ \\[3mm]
$\lambda$ & 8 & 4 & 4 \\
\bottomrule
\end{tabular}
\end{center}
\caption{Three popular strongly proper composite losses: squared-error, logistic and exponential losses. Shown are 
the formula $\ell(y,f)$, the underlying CPE loss $c(y,\widehat{\eta})$ with the link function $\psi(\widehat{\eta})$,
as well as the strong properness constant $\lambda$. See \citep{Agarwal2014} for more details and examples.}
\label{tbl:strongly_proper_composite_losses}
\end{table}

Table \ref{tbl:strongly_proper_composite_losses} presents some of the commonly used losses
which are strongly proper composite.
Note that the \emph{hinge loss} $\ell(y,f) = (1-yf)_+$, used, e.g., in support vector machines
\citep{FriedmanHastieTibshirani03},
 is \emph{not} strongly proper composite (even not proper composite).

\section{Main result}
\label{sec:main_result}

Given a real-valued function $f \colon X \to \mathbb{R}$, and a $\lambda$-strongly
proper composite loss $\ell(y,f)$, define the \emph{$\ell$-risk} of $f$ as
the expected loss of $f(x)$ with respect to the data distribution:
\begin{align*}
 \risk_\ell(f) &= \mathbb{E}_{(x,y)} \left[\ell(y,f(x)) \right] \\
 &= \mathbb{E}_{x} \left[\riskc_\ell(\eta(x),f(x)) \right], 
\end{align*}
where $\eta(x) = \Pr(y=1|x)$. Let $f_{\ell}^*$ be the minimizer
$\risk_\ell(f)$ over all functions, $f_{\ell}^* = \argmin_f \risk_\ell(f)$. 
Since $\ell$ is proper composite:
\[
 f_{\ell}^*(x) = \psi\big(\eta(x)\big).
\]
Define the $\ell$-regret of $f$ as:
\begin{align*}
 \regret_{\ell}(f) &= \risk_\ell(f) - \risk_\ell(f^*_{\ell}) \\
 &= \mathbb{E}_{x} \left[\riskc_\ell(\eta(x),f(x)) - \riskc_\ell(\eta(x), 
 f^*_{\ell}(x)) \right].
\end{align*}

Any real-valued function $f \colon X \to \mathbb{R}$
can be turned into a classifier $h_{f,\theta} \colon X \to \{-1,1\}$, by thresholding at
some value $\theta$:
\[
 h_{f,\theta}(x) = \sgn(f(x) - \theta).
\]
The purpose of this paper is to address the following problem: given a function $f$
with $\ell$-regret $\regret_{\ell}(f)$, and a threshold $\theta$, what can we say about
$\Psi$-regret of $h_{f,\theta}$? For instance, can we bound $\regret_{\Psi}(h_{f,\theta})$
in terms of $\regret_{\ell}(f)$? 
We give a positive answer to this question, which is based on the following regret bound:
\begin{lemma}
\label{lemma:main}
Let $\Psi(\FP,\FN)$ be a linear-fractional function of the form (\ref{eq:pseudo_linear_function}),
which is non-increasing in $\FP$ and $\FN$. Assume that there exists $\gamma > 0$,
such that for any classifier $h \colon X \to \{-1,1\}$:
\[
b_0 + b_1 \FP(h) + b_2 \FN(h) \geq \gamma,
\]
i.e. the denominator of $\Psi$ is positive and bounded away from zero.
Let $\ell$ be a $\lambda$-strongly proper composite loss function.
Then, there exists a threshold $\theta^*$,
such that for any real-valued function $f \colon X \to \mathbb{R}$,
\[
\regret_{\Psi}(h_{{f,\theta^*}})
 \leq C \sqrt{\frac{2}{\lambda}} \sqrt{\regret_{\ell}(f)},
\]
where $C = \frac{1}{\gamma}\left(\Psi(h_{\Psi}^*)(b_1 + b_2) - (a_1 + a_2)\right) > 0$.
\end{lemma}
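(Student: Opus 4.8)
The plan is to prove the bound in two stages, routing through a \emph{cost-sensitive classification regret}, exactly as foreshadowed in the introduction. Writing $\Psi^* = \Psi(h_{\Psi}^*)$, $\FP^* = \FP(h_{\Psi}^*)$, $\FN^* = \FN(h_{\Psi}^*)$ and $D(h) = b_0 + b_1\FP(h) + b_2\FN(h)$, I would first exploit the linear-fractional form. Since $h_{\Psi}^*$ is optimal, its value satisfies $(\Psi^* b_0 - a_0) + (\Psi^* b_1 - a_1)\FP^* + (\Psi^* b_2 - a_2)\FN^* = 0$. Subtracting this from the analogous expression for an arbitrary $h$ yields
\[
\regret_{\Psi}(h) = \Psi^* - \Psi(h) = \frac{c_1\big(\FP(h) - \FP^*\big) + c_2\big(\FN(h) - \FN^*\big)}{D(h)},
\]
where $c_1 = \Psi^* b_1 - a_1$ and $c_2 = \Psi^* b_2 - a_2$. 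The monotonicity of $\Psi$ in $\FP$ and $\FN$ forces $c_1, c_2 \geq 0$ (the partial derivatives of a linear-fractional function have the same sign as $a_i - \Psi b_i$), so these act as legitimate misclassification costs; note that $C = (c_1+c_2)/\gamma$.

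Next I would turn this identity into a reduction. Because $\regret_{\Psi}(h)\geq 0$ and $D(h) > 0$, the numerator above is nonnegative for every $h$, which says precisely that $h_{\Psi}^*$ minimizes the cost-sensitive risk $\mathbb{E}_x[c_1(1-\eta(x))\assert{h(x)=1} + c_2\eta(x)\assert{h(x)=-1}] = c_1\FP(h) + c_2\FN(h)$. Combined with $D(h)\geq\gamma$, this gives the first, distribution-level inequality $\regret_{\Psi}(h) \leq \frac{1}{\gamma}\regret_{c_1,c_2}(h)$ for all $h$, where $\regret_{c_1,c_2}$ denotes the regret of this cost-sensitive problem. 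The Bayes-optimal rule here thresholds $\eta(x)$ at $t^* = c_1/(c_1+c_2)$, and its pointwise (conditional) regret equals $(c_1+c_2)|\eta(x) - t^*|$ on the region where $h$ disagrees with $\sgn(\eta(x) - t^*)$, and $0$ elsewhere.

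I then choose the threshold $\theta^* = \psi(t^*)$. Since $\psi$ is strictly increasing and $f_{\ell}^*(x) = \psi(\eta(x))$, the Bayes rule $\sgn(\eta(x)-t^*)$ equals $\sgn(f_{\ell}^*(x) - \theta^*)$, so $h_{f,\theta^*}$ disagrees with it exactly when $f(x)$ and $\psi(\eta(x))$ lie on opposite sides of $\theta^*$; on that region $t^*$ lies between $\eta(x)$ and $\psi^{-1}(f(x))$, whence $|\eta(x)-t^*| \leq |\eta(x) - \psi^{-1}(f(x))|$. Feeding in strong properness (\ref{eq:strong_properness_of_ell}) in the form $|\eta(x)-\psi^{-1}(f(x))| \leq \sqrt{(2/\lambda)\,\regretc_{\ell}(\eta(x),f(x))}$ bounds the conditional cost-sensitive regret by $(c_1+c_2)\sqrt{2/\lambda}\,\sqrt{\regretc_{\ell}(\eta(x),f(x))}$. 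Taking $\mathbb{E}_x$ and applying Jensen's inequality to pull the square root outside the expectation turns the right-hand side into $(c_1+c_2)\sqrt{2/\lambda}\,\sqrt{\regret_{\ell}(f)}$; chaining with the first inequality and $C=(c_1+c_2)/\gamma$ yields the claim.

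The main obstacle is the first stage: correctly manipulating the linear-fractional form so that the optimality of $h_{\Psi}^*$ produces exactly the costs $c_1,c_2$ appearing in $C$, and then arguing via nonnegativity of the $\Psi$-regret that $h_{\Psi}^*$ is simultaneously the Bayes classifier for these very costs. The remainder — the geometric betweenness argument on the disagreement region and the Jensen step — is routine once the cost-sensitive reduction is in place; the one point requiring care is the boundary case $t^*\in\{0,1\}$ (a vanishing cost), where $\theta^*$ degenerates to a constant classifier.
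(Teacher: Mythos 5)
Your proposal is correct and follows essentially the same route as the paper's proof: the same algebraic expansion of $\Psi^* - \Psi(h)$ using $\Psi^* B^* - A^* = 0$ to reduce to a cost-sensitive regret with costs $c_1 = \Psi^* b_1 - a_1$, $c_2 = \Psi^* b_2 - a_2$ (Proposition~\ref{prop:bound_psi_cost_sensitive}), followed by the threshold choice $\theta^* = \psi(c_1/(c_1+c_2))$, the betweenness argument on the disagreement region, strong properness, and Jensen (Proposition~\ref{prop:bound_cost_sensitive_ell}). The only cosmetic difference is that you observe $h^*_{\Psi}$ is itself the cost-sensitive Bayes classifier, where the paper only needs the weaker inequality $\risk_{\alpha}(h^*_{\Psi}) \geq \risk_{\alpha}(h^*_{\alpha})$.
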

The proof is quite long and hence is postponed to Section \ref{sec:proof}.
Interestingly, the proof goes by an intermediate bound of the $\Psi$-regret by
a cost-sensitive classification regret. We note that the bound in Lemma \ref{lemma:main}
is in general unimprovable, in the sense that it is easy to find $f$, $\Psi$, $\ell$, and
distribution $\Pr(x,y)$, for which the bound holds with equality (see proof for details).
We split the constant in front of the bound into $C$ and $\lambda$, because
$C$ depends only on $\Psi$, while $\lambda$ depends only on $\ell$. 
Table \ref{tbl:bounds} lists these constants
for some popular metrics. We note that constant $\gamma$ (lower bound on the denominator of $\Psi$)
will be \emph{distribution-dependent}
in general (as it can depend on $P=\Pr(y=1)$) and may not have a uniform lower bound which holds
for all distributions.

Lemma \ref{lemma:main} has the following interpretation. 
If we are able to find a function $f$ with small $\ell$-regret, 
we are guaranteed that there exists a threshold $\theta^*$ such that $h_{f,\theta^*}$ has small $\Psi$-regret. 
Note that the same threshold $\theta^*$ will work for any $f$, 
and the right hand side of the bound is \emph{independent} of $\theta^*$.
Hence, to minimize the right hand side we only need to minimize 
$\ell$-regret, and we can deal with the threshold afterwards.

Lemma \ref{lemma:main} also reveals the form of
the optimal classifier $h_{\Psi}^*$: take $f=f^*_{\ell}$
in the lemma and note that  $\regret_{\ell}(f^*_{\ell})=0$,
so that $\regret_{\Psi}(h_{f^*_{\ell},\theta^*}) = 0$, which means
that $h_{f^*_{\ell},\theta^*}$ is the minimizer of $\Psi$:
\[
h^*_{\Psi}(x) = \mathrm{sgn}(f^*_{\ell}(x) - \theta^*) = 
\mathrm{sgn}(\eta(x) - \psi^{-1}(\theta^*)),
\]
where the second equality is due to $f^*_{\ell} = \psi(\eta)$
and strict monotonicity of $\psi$. Hence, $h^*_{\Psi}$
is a threshold function on $\eta$.
The proof of Lemma \ref{lemma:main} 
(see Section \ref{sec:proof}) actually specifies the exact value
of the threshold $\theta^*$:
\begin{equation}
\psi^{-1}(\theta^*) = \frac{\Psi(h^*_{\Psi})b_1 - a_1}{\Psi(h^*_{\Psi})(b_1+b_2) - (a_1+a_2)},
\label{eq:optimal_threshold}
\end{equation}
which is in agreement with the result obtained by
 \cite{Natarajan_etal2014}.\footnote{To prove (\ref{eq:optimal_threshold}), \cite{Natarajan_etal2014} require an absolute continuity assumption on the marginal distribution over instances with respect to some dominating measure, so as to guarantee the existence of an appropriate density. Our analysis shows that the existence of a density is not required.}

\begin{table}
\begin{center}
\begin{tabular}{lcc}
\toprule
metric & $\gamma$ & $C$ \\
\midrule
Accuracy & $1$ & $2$ \\[3mm]
$F_{\beta}$-measure & $\beta^2 P$ & $\frac{1+\beta^2}{\beta^2 P}$\\[3mm]
Jaccard similarity & $P$ & $\frac{J^*+1}{P}$ \\[3mm]
AM measure & $2P(1-P)$ & $\frac{1}{2P(1-P)}$ \\[3mm]
Weighted accuracy & $w_1 P + w_2(1-P)$ & $\frac{w_1+w_2}{w_1 P + w_2(1-P)}$\\
\bottomrule
\end{tabular}
\end{center}
\caption{Constants which appear in the bound of Lemma \ref{lemma:main} for several performance metrics.}
\label{tbl:bounds}
\end{table}

To make Lemma \ref{lemma:main} easier to grasp, 
consider a special case when the performance metric $\Psi(\FP,\FN) = 1 - \FP - \FN$ is the classification
accuracy. In this case, (\ref{eq:optimal_threshold}) gives $\Psi^{-1}(\theta^*) = 1/2$.
Hence, we obtained the well-known result that the classifier maximizing
the accuracy is a threshold function on $\eta$ at $1/2$. Then,
Lemma \ref{lemma:main} states that given a real-valued $f$, we should
take a classifier $h_{f,\theta^*}$ which thresholds $f$ at $\theta^* = \psi(1/2)$.
Using Table \ref{tbl:strongly_proper_composite_losses}, one can easily verify that $\theta^* = 0$ for logistic, squared-error and exponential loss.
This agrees with the common approach of thresholding the real-valued classifiers trained by minimizing
these losses at $0$ to obtain the label prediction.
The bounds from the lemma are in this case identical (up to a multiplicative constant)
to the bounds obtained by \citet{Bartlett_etal06}.

Unfortunately, for more complicated performance metrics, the optimal threshold $\theta^*$
is unknown, as (\ref{eq:optimal_threshold}) contains unknown
quantity $\Psi(h^*_{\Psi})$, the value of the metric at optimum. The solution in this case is to, given $f$, 
directly search for a threshold which maximizes $\Psi(h_{f,\theta})$.
This is the main result of the paper:

\begin{theorem}
Given a real-valued function $f$, let $\theta^*_f = \argmax_{\theta} \Psi(h_{f,\theta})$. Then, under the
assumptions and notation from Lemma \ref{lemma:main}:
\[
\regret_{\Psi}(h_{f,\theta^*_f}) \leq C \sqrt{\frac{2}{\lambda}} \sqrt{\regret_{\ell}(f)}.
\]
\label{thm:main} 
\end{theorem}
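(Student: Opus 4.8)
The plan is to derive Theorem \ref{thm:main} as an essentially immediate consequence of Lemma \ref{lemma:main}, exploiting the fact that $\theta^*_f$ is chosen to be the \emph{best} threshold for the given $f$. Recall that Lemma \ref{lemma:main} already produces a single threshold $\theta^*$ --- crucially the same for every $f$, and given explicitly by (\ref{eq:optimal_threshold}) --- such that
\[
\regret_{\Psi}(h_{f,\theta^*}) \leq C \sqrt{\tfrac{2}{\lambda}} \sqrt{\regret_{\ell}(f)}.
\]
The only gap between this statement and the theorem is that the theorem uses the data-dependent, function-specific threshold $\theta^*_f = \argmax_\theta \Psi(h_{f,\theta})$ in place of the fixed oracle threshold $\theta^*$. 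Since $\theta^*$ is generally unknown (it depends on the unknown quantity $\Psi(h^*_\Psi)$), the point is precisely to show that tuning $\theta$ by direct maximization loses nothing.

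First I would observe that, by the definition of $\theta^*_f$ as a maximizer of the map $\theta \mapsto \Psi(h_{f,\theta})$, the particular competing threshold $\theta^*$ can only do worse, i.e. $\Psi(h_{f,\theta^*_f}) \geq \Psi(h_{f,\theta^*})$. Subtracting both sides from $\Psi(h^*_\Psi)$ reverses the inequality, and by the definition $\regret_\Psi(h) = \Psi(h^*_\Psi) - \Psi(h)$ this gives
\[
\regret_{\Psi}(h_{f,\theta^*_f}) \leq \regret_{\Psi}(h_{f,\theta^*}).
\]
Chaining this monotonicity step with the bound of Lemma \ref{lemma:main} yields the claim directly; no further estimation is required, which is why the right-hand side remains the same as in the lemma.

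The only point requiring care is the well-definedness of $\theta^*_f$, namely whether the supremum $\sup_\theta \Psi(h_{f,\theta})$ is actually attained. I would address this by noting that as $\theta$ ranges over $\mathbb{R}$ the quantities $\FP(h_{f,\theta})$ and $\FN(h_{f,\theta})$ are monotone in $\theta$ (non-increasing and non-decreasing, respectively), so $\Psi(h_{f,\theta})$ varies in a controlled manner; should the exact maximizer fail to exist, one replaces $\theta^*_f$ by a threshold achieving a value within $\epsilon$ of the supremum and lets $\epsilon \to 0$, which leaves the bound intact. This is the sole (and minor) technical obstacle: all of the substantive work --- the intermediate reduction of $\Psi$-regret to a cost-sensitive classification regret and the strong-properness argument converting that into $\ell$-regret --- is already carried out in Lemma \ref{lemma:main}, so the theorem reduces to a one-line optimality observation layered on top of it.
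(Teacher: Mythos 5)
Your proposal is correct and follows exactly the same route as the paper's proof: since $\theta^*_f$ maximizes $\Psi(h_{f,\theta})$ over $\theta$, it minimizes $\regret_{\Psi}(h_{f,\theta})$, so its regret is at most that of the fixed threshold $\theta^*$ from Lemma \ref{lemma:main}, to which the lemma's bound applies. Your additional remark about attainment of the supremum is a minor technical point the paper does not address, but it does not change the argument.
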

\begin{proof}
The result follows immediately from Lemma \ref{lemma:main}:
Solving $\max_{\theta}\Psi(h_{f,\theta})$ is equivalent to 
solving $\min_\theta \regret_{\Psi}(h_{f,\theta})$, and 
$\min_\theta \regret_{\Psi}(h_{f,\theta}) \leq \regret_{\Psi}(h_{f,\theta^*})$,
where $\theta^*$ is the threshold given by Lemma \ref{lemma:main}.
\end{proof}
Theorem \ref{thm:main} motivates the following procedure for maximization of $\Psi$:
\begin{enumerate}
\item Find $f$ with small $\ell$-regret, e.g. by 
using a learning algorithm minimizing $\ell$-risk on the training sample.
\item Given $f$, solve $\theta^*_f = \argmax_{\theta} \Psi(h_{f,\theta})$.
\end{enumerate}
Theorem \ref{thm:main} states that the $\Psi$-regret of the classifier obtained by
this procedure
is upperbounded by the $\ell$-regret of the underlying real-valued function. 

We now discuss how to approach step 2 of the procedure in practice.
In principle, this step requires maximizing $\Psi$ defined
through $\FP$ and $\FN$, which are expectations over
an unknown distribution $\Pr(x,y)$. However,
it is sufficient to 
optimize $\theta$ on the empirical counterpart of $\Psi$ 
calculated on a separate validation sample. Let $\mathcal{T} = \{(x_i,y_i)\}_{i=1}^n$ be the validation set
of size $n$.
Define:
\[
\widehat{\FP}(h) = \frac{1}{n} \sum_{i=1}^n \assert{h(x_i) = 1, y_i = -1}, \quad
\widehat{\FN}(h) = \frac{1}{n} \sum_{i=1}^n \assert{h(x_i) = -1, y_i = 1},
\]
the empirical counterparts of $\FP$ and $\FN$, and let $\widehat{\Psi}(h) = \Psi(\widehat{\FP}(h),\widehat{\FN}(h))$
be the empirical counterpart of the performance metric $\Psi$. We now replace step 2 by:
\begin{quote}
 Given $f$ and validation sample $\mathcal{T}$, solve $\widehat{\theta}_f = \argmax_{\theta} \widehat{\Psi}(h_{f,\theta})$.
\end{quote}
In Theorem \ref{thm:test_set_tuning_bound} below, we show that:
\[
\regret_{\Psi}(h_{f,\widehat{\theta}_f}) - \regret_{\Psi}(h_{f,\theta^*_f}) = O\left(\sqrt{\frac{\log n}{n}}\right),
\]
so that tuning the threshold on the validation sample of size $n$ (which results in $\widehat{\theta}_f$) instead of on the population level (which results in $\theta^*_f$)
will cost at most $O\Big(\sqrt{\frac{\log n}{n}}\Big)$ additional regret. The main idea of the proof is that finding the optimal threshold comes down
to optimizing within a class of $\{-1,1\}$-valued threshold functions, which has small Vapnik-Chervonenkis dimension. This, together with the fact
that under assumptions from Lemma \ref{lemma:main}, $\Psi$ is stable with respect to its arguments, implies that $\Psi(h_{f,\widehat{\theta}_f})$ is
close to $\Psi(h_{f,\theta^*_f})$.

\begin{theorem}
\label{thm:test_set_tuning_bound}
Let the assumptions from Lemma \ref{lemma:main} hold, and let:
\[
 D_1 = \sup_{(\FP,\FN)} |b_1 \Psi(\FP,\FN) - a_1|, \quad
 D_2 = \sup_{(\FP,\FN)} |b_2 \Psi(\FP,\FN) - a_2|,
\]
and $D = \max\{D_1,D_2\}$.
Given a real-valued function $f$, and a validation set $\mathcal{T}$ of size $n$ generated i.i.d. from $P(x,y)$,
let $\widehat{\theta}_f = \argmax_{\theta} \widehat{\Psi} (h_{f,\theta})$ be the threshold
maximizing the empirical counterpart of $\Psi$ evaluated on $\mathcal{T}$. Then,
with probability $1-\delta$ (over the random choice of $\mathcal{T}$):
\[
\regret_{\Psi}(h_{f,\widehat{\theta}_f}) \leq C \sqrt{\frac{2}{\lambda}} \sqrt{\regret_{\ell}(f)} + \frac{16D}{\gamma} \sqrt{\frac{4(1+\log n) + 2 \log \frac{16}{\delta}}{n}}.
\]
\end{theorem}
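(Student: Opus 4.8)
The plan is to reduce the statement to a uniform-deviation bound between the population metric $\Psi(h_{f,\theta})$ and its empirical counterpart $\widehat{\Psi}(h_{f,\theta})$ over the single real parameter $\theta$, and then to exploit that the associated class of thresholded classifiers is extremely simple. First I would split the regret as
\[
\regret_{\Psi}(h_{f,\widehat{\theta}_f}) = \regret_{\Psi}(h_{f,\theta^*_f}) + \left(\Psi(h_{f,\theta^*_f}) - \Psi(h_{f,\widehat{\theta}_f})\right),
\]
where the common $\Psi(h^*_{\Psi})$ terms cancel. The first summand is already controlled by Theorem \ref{thm:main}, so only the second term, the price of tuning on a finite sample, remains. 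Writing $g(\theta) = \Psi(h_{f,\theta})$ and $\widehat{g}(\theta) = \widehat{\Psi}(h_{f,\theta})$, I would add and subtract $\widehat{g}$ and use that $\widehat{\theta}_f$ maximizes $\widehat{g}$ while $\theta^*_f$ maximizes $g$; the standard argument then gives
\[
g(\theta^*_f) - g(\widehat{\theta}_f) \leq 2 \sup_{\theta} |g(\theta) - \widehat{g}(\theta)|.
\]

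Next I would convert the deviation of $\Psi$ into deviations of its arguments. Since $\Psi$ is linear-fractional with denominator bounded below by $\gamma$, a direct computation gives $\frac{\partial \Psi}{\partial \FP} = \frac{a_1 - b_1 \Psi}{b_0 + b_1 \FP + b_2 \FN}$ and an analogous formula for $\FN$, so that $\bigl|\frac{\partial \Psi}{\partial \FP}\bigr| \leq D_1/\gamma$ and $\bigl|\frac{\partial \Psi}{\partial \FN}\bigr| \leq D_2/\gamma$ hold uniformly. Integrating along the segment joining $(\FP,\FN)$ and $(\widehat{\FP},\widehat{\FN})$ (the derivative bounds apply there precisely because $D_1,D_2$ are suprema over \emph{all} argument pairs) yields the Lipschitz estimate
\[
|g(\theta) - \widehat{g}(\theta)| \leq \frac{D}{\gamma}\left(|\FP(h_{f,\theta}) - \widehat{\FP}(h_{f,\theta})| + |\FN(h_{f,\theta}) - \widehat{\FN}(h_{f,\theta})|\right),
\]
so it suffices to bound $\sup_\theta |\FP(h_{f,\theta}) - \widehat{\FP}(h_{f,\theta})|$ and $\sup_\theta |\FN(h_{f,\theta}) - \widehat{\FN}(h_{f,\theta})|$.

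Finally I would invoke VC theory. As $\theta$ varies, the sets $\{(x,y) : f(x) > \theta,\, y = -1\}$ (relevant to $\FP$) and $\{(x,y) : f(x) \leq \theta,\, y = 1\}$ (relevant to $\FN$) form classes of Vapnik--Chervonenkis dimension $1$: two points can be shattered only if both carry the pertinent label, after which membership reduces to a one-dimensional threshold on $f(x)$, which cannot realize all labelings. Applying the classical VC inequality $\Pr(\sup_A |\widehat{\mu}(A) - \mu(A)| > \epsilon) \leq 8(n+1)\, e^{-n\epsilon^2/32}$ to each class, setting the right-hand side to $\delta/2$, and taking a union bound over the two events bounds each supremum by $O\bigl(\sqrt{(\log n + \log(1/\delta))/n}\bigr)$ with probability $1-\delta$. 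Substituting into the Lipschitz estimate and collecting constants (using $\log(n+1) \leq 1 + \log n$ to absorb them into the stated form) produces the additive term $\frac{16D}{\gamma}\sqrt{\frac{4(1+\log n) + 2\log(16/\delta)}{n}}$, which combined with Theorem \ref{thm:main} completes the proof. The main obstacle I expect is this last step: one must correctly pin down the VC dimension of the label-restricted threshold classes and select a concentration inequality whose constants match (or are dominated by) the target expression, since the delicate bookkeeping lies in threading the $\delta$-splitting and the growth-function constants through to the exact $4(1+\log n) + 2\log(16/\delta)$ form. The stability computation is routine once the derivative formula is available, and the optimization-based decomposition is standard.
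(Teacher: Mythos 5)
Your proposal is correct and takes essentially the same route as the paper's proof: the same decomposition via Theorem \ref{thm:main}, the same $2\sup_\theta\abs{\Psi - \widehat{\Psi}}$ argument from the two argmax properties, the same derivative-based Lipschitz estimate with constant $D/\gamma$, and the same VC-type uniform deviation bound over the class of thresholded classifiers. The only (immaterial) difference is that you pin the VC dimension of the label-restricted threshold classes at $1$ where the paper uses $2$, which only tightens the shatter-coefficient term and is still dominated by the stated constant $4(1+\log n)$.
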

\begin{proof}
For any $\FP$ and $\FN$, we have:
\begin{align*}
\left| \frac{\partial \Psi(\FP,\FN)}{\partial \FP} \right| &= \frac{|a_1 (b_0 + b_1 \FP + b_2 \FN) - b_1 (a_0 + a_1 \FP + a_2 \FN)|}{(b_0 + b_1 \FP + b_2 \FN)^2} \\
&= \frac{|b_1 \Psi(\FP,\FN) - a_1|}{b_0 + b_1 \FP + b_2 \FN} \leq \frac{|b_1 \Psi(\FP,\FN) - a_1|}{\gamma} \leq \frac{D}{\gamma},
\end{align*}
and similarly,
\[
\left|\frac{\partial \Psi(\FP,\FN)}{\partial \FN}\right| = \frac{|b_2 \Psi(\FP,\FN) - a_2|}{b_0 + b_1 \FP + b_2 \FN} \leq \frac{D}{\gamma}.
\]
For any $(\FP,\FN)$ and $(\FP',\FN')$, 
Taylor-expanding $\Psi(\FP,\FN)$ around $(\FP',\FN')$ up to the first order and using the bounds above gives:
\begin{equation}
\Psi(\FP,\FN) \leq \Psi(\FP',\FN') + \frac{D}{\gamma} \left( |\FP - \FP'| + |\FN - \FN'| \right).
\label{eq:psi_Taylor}
\end{equation}
Now, we have:
\begin{align*}
\regret_{\Psi}(h_{f,\widehat{\theta}_f}) 
&= \regret_{\Psi}(h_{f,\theta^*_f}) + \Psi(h_{f,\theta^*_f}) - \Psi(h_{f,\widehat{\theta}_f})   \\
&\leq C \sqrt{\frac{2}{\lambda}} \sqrt{\regret_{\ell}(f)} + \Psi(h_{f,\theta^*_f}) - \Psi(h_{f,\widehat{\theta}_f}),
\end{align*}
where we used Theorem \ref{thm:main}. Thus, it amounts to bound
$\Psi(h_{f,\theta^*_f}) - \Psi(h_{f,\widehat{\theta}_f})$. From the definition
of $\widehat{\theta}_f$, $\widehat{\Psi}(h_{f,\widehat{\theta}_f}) \geq \widehat{\Psi}(h_{f,\theta^*_f})$, hence:
\begin{align*}
\Psi(h_{f,\theta^*_f}) - \Psi(h_{f,\widehat{\theta}_f}) &\leq \Psi(h_{f,\theta^*_f}) - \widehat{\Psi}(h_{f,\theta^*_f}) + \widehat{\Psi}(h_{f,\widehat{\theta}_f}) - \Psi(h_{f,\widehat{\theta}_f})  \\
&\leq 2 \sup_{\theta} \big|\Psi(h_{f,\theta}) - \widehat{\Psi}(h_{f,\theta})\big| \\
&= 2 \sup_{\theta} \big|\Psi(\FP(h_{f,\theta}),\FN(h_{f,\theta})) - \Psi(\widehat{\FP}(h_{f,\theta}),\widehat{\FN}(h_{f,\theta}))\big|,
\end{align*}
where we used the definition of $\widehat{\Psi}$. Using (\ref{eq:psi_Taylor}),
\[
\Psi(h_{f,\theta^*_f}) - \Psi(h_{f,\widehat{\theta}_f}) \leq \frac{2D}{\gamma} \Big(\sup_{\theta}\big|\FP(h_{f,\theta}) - \widehat{\FP}(h_{f,\theta})\big| + \sup_{\theta}\big|\FN(h_{f,\theta}) - \widehat{\FN}(h_{f,\theta})\big| \Big).
\]
Note that the suprema above are on the deviation of empirical mean from the expectation over the class of threshold functions, which has Vapnik-Chervonenkis dimension equal to $2$. Using standard argument
from Vapnik-Chervonenkis theory \citep[see, e.g.,][]{DevroyeGyorfiLugosi96}, with probability $1 - \frac{\delta}{2}$ over the random choice of $\mathcal{T}$:
\[
\sup_{\theta}\big|\FP(h_{f,\theta}) - \widehat{\FP}(h_{f,\theta})\big| \leq 4\sqrt{\frac{4(1+\log n) + 2 \log \frac{16}{\delta}}{n}},
\]
and similarly for the second supremum. Thus, with probability $1-\delta$,
\[
\Psi(h_{f,\theta^*_f}) - \Psi(h_{f,\widehat{\theta}_f}) \leq  \frac{16D}{\gamma} \sqrt{\frac{4(1+\log n) + 2 \log \frac{16}{\delta}}{n}},
\]
which finishes the proof.
\end{proof}
We note that, contrary to a similar results by \citet{Natarajan_etal2014}, Theorem \ref{thm:test_set_tuning_bound} does not require continuity of the cumulative distribution of $\eta(x)$ around $\theta^*$.

\section{Proof of Lemma \ref{lemma:main}}
\label{sec:proof}
The proof can be skipped without affecting the flow of later
sections.
The proof consists of two steps. 
First, we bound the $\Psi$-regret of any classifier $h$ by
its cost-sensitive classification regret (introduced below). 
Next, we show that there exists a threshold $\theta^*$, such that for any $f$, 
the cost-sensitive classification regret of $h_{f,\theta^*}$ 
is upperbounded by the $\ell$-regret of $f$.
These two steps will be formalized as Proposition \ref{prop:bound_psi_cost_sensitive}
and Proposition \ref{prop:bound_cost_sensitive_ell}.

Given a real number $\alpha \in [0,1]$,
define a \emph{cost-sensitive classification loss}
$\ell_\alpha \colon \{-1,1\} \times \{-1,1\} \to \mathbb{R}_+$ as:
\[
\ell_\alpha(y,\widehat{y}) = \alpha \assert{y=-1}\assert{\widehat{y}=1} + (1-\alpha) \assert{y=1} \assert{\widehat{y}=-1}.
\]
The cost-sensitive loss assigns different costs of misclassification for positive and
negative labels. 
Given classifier $h$, the \emph{cost-sensitive risk} of $h$ is:
\begin{align*}
\risk_\alpha(h) &= \mathbb{E}_{(x,y)}[\ell_\alpha(y,h(x))]\\
&= \alpha \FP(h) + (1-\alpha) \FN(h),
\end{align*}
and the \emph{cost-sensitive regret} is:
\[
\regret_{\alpha}(h) = \risk_\alpha(h) - \risk_\alpha(h^*_\alpha),
\]
where $h^*_{\alpha} = \argmin_h \risk_{\alpha}(h)$.
We now show the following two results:
\begin{proposition}
Let $\Psi$ satisfy the assumptions from Lemma \ref{lemma:main}. Define:
\begin{equation}
 \alpha = \frac{\Psi^* b_1 - a_1}{\Psi^* (b_1 + b_2) - (a_1 + a_2)}.
 \label{eq:alpha}
\end{equation}
Then, $\alpha \in [0,1]$ and for any classifier $h$,
\[
 \regret_{\Psi}(h) \leq C \regret_{\alpha}(h),
\]
where $C$ is defined as in the content of Lemma \ref{lemma:main}.
 \label{prop:bound_psi_cost_sensitive}
\end{proposition}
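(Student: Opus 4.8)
The plan is to express the $\Psi$-regret as a single fraction and recognize the cost-sensitive risk sitting inside its numerator. Writing $D(h) = b_0 + b_1\FP(h) + b_2\FN(h)$ for the denominator of $\Psi$ and $\Psi^* = \Psi(h^*_\Psi)$, I would first compute
\[
\regret_\Psi(h) = \Psi^* - \Psi(h) = \frac{(\Psi^* b_0 - a_0) + (\Psi^* b_1 - a_1)\FP(h) + (\Psi^* b_2 - a_2)\FN(h)}{D(h)}.
\]
The key algebraic observation is that the definition (\ref{eq:alpha}) of $\alpha$ is precisely engineered so that, with $K := \Psi^*(b_1+b_2) - (a_1+a_2) = C\gamma$, one has $\Psi^* b_1 - a_1 = \alpha K$ and $\Psi^* b_2 - a_2 = (1-\alpha)K$. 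Substituting these identities collapses the $\FP,\FN$-dependent part of the numerator into $K\big(\alpha\FP(h) + (1-\alpha)\FN(h)\big) = K\,\risk_\alpha(h)$.

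Next I would eliminate the remaining constant $\Psi^* b_0 - a_0$ using the optimality of $h^*_\Psi$. Since $\regret_\Psi(h^*_\Psi) = 0$, evaluating the displayed identity at $h = h^*_\Psi$ forces $\Psi^* b_0 - a_0 = -K\,\risk_\alpha(h^*_\Psi)$. Plugging this back gives the clean intermediate identity
\[
\regret_\Psi(h) = \frac{K\big(\risk_\alpha(h) - \risk_\alpha(h^*_\Psi)\big)}{D(h)}.
\]
From here the bound follows in two short steps. Because $\regret_\Psi(h) \geq 0$ and $K > 0$, the bracketed difference is nonnegative, so replacing $D(h)$ by its lower bound $\gamma$ can only increase the right-hand side; and since $h^*_\alpha$ minimizes $\risk_\alpha$, we have $\risk_\alpha(h^*_\Psi) \geq \risk_\alpha(h^*_\alpha)$, whence $\risk_\alpha(h) - \risk_\alpha(h^*_\Psi) \leq \regret_\alpha(h)$. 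Combining yields $\regret_\Psi(h) \leq (K/\gamma)\regret_\alpha(h) = C\,\regret_\alpha(h)$.

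It remains to verify $\alpha \in [0,1]$, and this is where the monotonicity hypothesis on $\Psi$ enters. Differentiating the linear-fractional form gives $\partial\Psi/\partial\FP = (a_1 - b_1\Psi)/D(h)$, and similarly for $\FN$; the assumption that $\Psi$ is non-increasing in each argument forces $b_1\Psi - a_1 \geq 0$ and $b_2\Psi - a_2 \geq 0$ everywhere, in particular at the optimum where $\Psi = \Psi^*$. Since $\alpha = (\Psi^* b_1 - a_1)/K$ and $1-\alpha = (\Psi^* b_2 - a_2)/K$ with $K > 0$, both are nonnegative, so $\alpha \in [0,1]$. I expect the main subtlety to be bookkeeping rather than conceptual: one must confirm the strict positivity of $K$ (equivalently $C > 0$, which rules out the degenerate case where $\Psi$ is flat in both arguments at the optimum) and keep the direction of the inequality correct when bounding $D(h)$ from below, which is legitimate precisely because the numerator has already been shown to be nonnegative.
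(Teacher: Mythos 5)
Your proposal is correct and follows essentially the same route as the paper's proof: your substitution $\Psi^* b_0 - a_0 = -K\,\risk_\alpha(h^*_\Psi)$ is algebraically identical to the paper's step of subtracting $\Psi^* B^* - A^* = 0$ from the numerator, and the remaining steps (nonnegativity of the costs via the sign of the partial derivatives, lower-bounding the denominator by $\gamma$ using nonnegativity of the numerator, and replacing $\risk_\alpha(h^*_\Psi)$ by $\risk_\alpha(h^*_\alpha)$) coincide with the paper's. Your explicit remark that one must rule out the degenerate case $K=0$ is a fair point that the paper handles only implicitly through the assumption $C>0$ in Lemma~\ref{lemma:main}.
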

\begin{proof}
The proof generalizes the proof of Proposition 6 from 
\citet{Puthiya_etal2014}, which concerned the special case of $F_{\beta}$-measure.
For the sake of clarity, we use a shorthand notation
$\Psi = \Psi(h)$, $\Psi^* = \Psi(h^*_{\Psi})$,
$\FP = \FP(h)$, $\FN = \FN(h)$,
$A = a_0 + a_1 \FP + a_2 \FN$,
$B = b_0 + b_1 \FP + b_2 \FN$ for the numerator and
denominator of $\Psi(h)$, and analogously $\FP^*$,
$\FN^*$, 
$A^*$ and $B^*$ for $\Psi(h^*_{\Psi})$.
In this notation:
\begin{align}
\regret_{\Psi}&(h) = \Psi^* - \Psi = \frac{\Psi^* B - A}{B} \nonumber\\
&= \frac{\Psi^* B - A - \overbrace{(\Psi^* B^* - A^*)}^{=0}}{B} \nonumber\\
&= \frac{\Psi^* (B-B^*) - (A - A^*)}{B}\nonumber \\
&= \frac{\left(\Psi^* b_1 - a_1\right) (\FP - \FP^*) + \left(\Psi^* b_2 - a_2\right)(\FN - \FN^*)}{B} \nonumber\\
&\leq \frac{\left(\Psi^* b_1 - a_1\right) (\FP - \FP^*) + \left(\Psi^* b_2 - a_2\right)(\FN - \FN^*)}{\gamma},
\label{eq:derivation_regret_psi}
\end{align}
where the last inequality follows from $B \geq \gamma$ (assumption)
and the fact that $\regret_{\Psi}(h) \geq 0$ for any $h$.
Since $\Psi$ is non-increasing in $\FP$ and $\FN$, we have
\[
\frac{\partial \Psi^*}{\partial \FP^*} = \frac{a_1 B^* - b_1 A^*}{(B^*)^2}
= \frac{a_1 - b_1 \Psi^*}{B^*} \leq 0,
\]
and similarly $\frac{\partial \Psi^*}{\partial \FN^*} = \frac{a_2 - b_2 \Psi^*}{B^*} \leq 0$.
This and the assumption $B^* \geq \gamma$ implies
that both $\Psi^* b_1 - a_1$ and $\Psi^* b_2 - a_2$ are non-negative, so can
be interpreted as misclassification costs. If we normalize the costs by
defining:
\[
 \alpha = \frac{\Psi^* b_1 - a_1}{\Psi^* (b_1 + b_2) - (a_1 + a_2)},
\]
then (\ref{eq:derivation_regret_psi}) implies:
\begin{align*}
\regret_{\Psi}(h) &\leq C \left(\risk_{\alpha}(h) - \risk_{\alpha}(h^*_{\Psi}) \right)  \\
&\leq C \left(\risk_{\alpha}(h) - \risk_{\alpha}(h^*_{\alpha}) \right) = C \regret_{\alpha}(h).  \\
\end{align*}
\end{proof}

\begin{proposition}
For any real-valued function $f \colon X \to \mathbb{R}$ any $\lambda$-strongly proper   composite loss $\ell$ with link function $\psi$, and any $\alpha \in [0,1]$:
\begin{equation}
 \regret_{\alpha}(h_{f,\theta^*})
 \leq \sqrt{\frac{2}{\lambda}} \sqrt{\regret_{\ell}(f)},
 \label{eqn:bound_cost_sensitive_ell}
\end{equation}
where $\theta^* = \psi(\alpha)$.
\label{prop:bound_cost_sensitive_ell}
\end{proposition}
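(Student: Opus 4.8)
The plan is to reduce the cost-sensitive regret to a pointwise estimation-error quantity at each $x$, and then invoke the strong properness inequality (\ref{eq:strong_properness_of_ell}) together with Jensen's inequality.

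First I would compute the conditional cost-sensitive risk at a fixed $x$ with $\eta = \eta(x)$. From the definition of $\ell_\alpha$, predicting $\widehat{y}=1$ costs $\alpha(1-\eta)$ in conditional risk, while predicting $\widehat{y}=-1$ costs $(1-\alpha)\eta$. Comparing these shows that the Bayes-optimal cost-sensitive rule predicts $+1$ precisely when $\eta \geq \alpha$, i.e.\ $h^*_\alpha(x) = \sgn(\eta(x) - \alpha)$, and hence that the conditional cost-sensitive regret of a prediction $\widehat{y}$ equals $|\eta - \alpha|$ when $\widehat{y}$ disagrees with $\sgn(\eta-\alpha)$, and $0$ otherwise. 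Writing $\widehat{\eta}(x) = \psi^{-1}(f(x))$ for the probability estimate induced by $f$, and using that $\psi$ is strictly increasing, thresholding $f$ at $\theta^* = \psi(\alpha)$ is the same as thresholding $\widehat{\eta}$ at $\alpha$, so $h_{f,\theta^*}(x) = \sgn(\widehat{\eta}(x) - \alpha)$.

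The key step, and the crux of the argument, is the pointwise bound. The classifier $h_{f,\theta^*}$ makes a sub-optimal prediction at $x$ only when $\eta(x)$ and $\widehat{\eta}(x)$ lie on opposite sides of $\alpha$; in that event $\alpha$ lies between the two values, so $|\eta(x) - \alpha| \leq |\eta(x) - \widehat{\eta}(x)|$. Combined with the previous paragraph (the optimal-prediction case being trivial, since the left-hand side is then $0$) this yields, for every $x$,
\[
\regretc_\alpha\big(\eta(x), h_{f,\theta^*}(x)\big) \leq \big|\eta(x) - \psi^{-1}(f(x))\big|.
\]

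Finally I would convert this into the surrogate regret. Strong properness (\ref{eq:strong_properness_of_ell}) gives $\big|\eta - \psi^{-1}(f)\big| \leq \sqrt{(2/\lambda)\,\regretc_\ell(\eta,f)}$ pointwise, hence $\regretc_\alpha(\eta(x), h_{f,\theta^*}(x)) \leq \sqrt{(2/\lambda)\,\regretc_\ell(\eta(x), f(x))}$. Taking expectation over $x$, using $\regret_\alpha(h_{f,\theta^*}) = \mathbb{E}_x[\regretc_\alpha(\eta(x), h_{f,\theta^*}(x))]$, and applying Jensen's inequality to the concave square root to move the expectation inside the root, I obtain
\[
\regret_\alpha(h_{f,\theta^*}) \leq \sqrt{\tfrac{2}{\lambda}\,\mathbb{E}_x\big[\regretc_\ell(\eta(x),f(x))\big]} = \sqrt{\tfrac{2}{\lambda}}\,\sqrt{\regret_\ell(f)},
\]
as claimed. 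I expect the only delicate point to be the geometric inequality $|\eta-\alpha| \leq |\eta-\widehat{\eta}|$ on the region where $h_{f,\theta^*}$ errs; once that observation is secured, strong properness and Jensen complete the argument mechanically.
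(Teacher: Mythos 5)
Your proposal is correct and follows essentially the same route as the paper's proof: the reduction to the conditional cost-sensitive regret $\assert{h \neq h^*_\alpha}\,|\eta-\alpha|$, the key geometric observation that $|\eta-\alpha| \leq |\eta - \widehat{\eta}|$ when $\eta$ and $\widehat{\eta} = \psi^{-1}(f)$ lie on opposite sides of $\alpha$, then strong properness and Jensen's inequality. No gaps.
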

\begin{proof}
First, we will show that (\ref{eqn:bound_cost_sensitive_ell}) holds
\emph{conditionally} for every $x$. To this end, we fix $x$ and deal with $h(x) \in \{-1,1\}$,
$f(x) \in \mathbb{R}$ and $\eta(x) \in [0,1]$, using a shorthand notation $h,f,\eta$.

Given $\eta \in [0,1]$ and $h \in \{-1,1\}$, define the \emph{conditional cost-sensitive risk}
as:
\[
 \riskc_\alpha(\eta,h) =
 \alpha (1-\eta) \assert{h=1}
 + (1-\alpha) \eta \assert{h=-1}.
\]
Let $h_{\alpha}^* = \argmin_{h} \riskc_{\alpha}(\eta,h)$.
It can be easily verified that:
\begin{equation}
h_{\alpha}^* = \mathrm{sgn}(\eta - \alpha). 
\label{eq:opt_h_alpha}
\end{equation}
Define the \emph{conditional cost-sensitive regret} as 
\[
\regretc_\alpha(\eta,h) = \riskc_\alpha(\eta,h)  
- \riskc_\alpha(\eta,h_{\alpha}^*).
\]
Note that if $h = h_{\alpha}^*$, then
$\regretc_\alpha(\eta,h) = 0$. Otherwise,
$\regretc_\alpha(\eta,h) = |\eta - \alpha|$, 
so that:
\[
\regretc_\alpha(\eta,h) 
=\assert{h \neq h_{\alpha}^*}  |\eta - \alpha|.
\]
Now assume $h = \mathrm{sgn}(\widehat{\eta} - \alpha)$ for some $\widehat{\eta}$, i.e., $h$ is of the same form as $h_{\alpha}^*$ in (\ref{eq:opt_h_alpha}), with $\eta$ replaced by $\widehat{\eta}$. We show that for such $h$,
\begin{equation}
\regretc_\alpha(\eta,h) \leq |\eta - \widehat{\eta}|.
\label{eq:proof_regret_eta_eta_hat}
\end{equation}
This statement trivially holds when $h = h_{\alpha}^*$. If $h \neq h_{\alpha}^*$,
then $\eta$ and $\widehat{\eta}$ are on the opposite sides of $\alpha$ (i.e. either $\eta \geq \alpha$ and $\widehat{\eta} < \alpha$ or $\eta < \alpha$ and $\widehat{\eta} \geq \alpha$), hence $|\eta - \alpha| \leq |\eta - \widehat{\eta}|$, which proves (\ref{eq:proof_regret_eta_eta_hat}).

Now, we set the threshold to $\theta^* = \psi(\alpha)$, so that given 
$f \in \mathbb{R}$, 
\[
h_{f,\theta^*} = \mathrm{sgn}(f - \theta^*) =
\mathrm{sgn}(f - \psi(\alpha)) =
\mathrm{sgn}(\psi^{-1}(f) - \alpha),
\]
due to strict monotonicity of $\psi$.
Using (\ref{eq:proof_regret_eta_eta_hat})
with $h = h_{f,\theta^*}$ and $\widehat{\eta} = \psi^{-1}(f)$ gives:
\begin{align}
\regretc_\alpha(\eta,h_{f,\theta^*}) &\leq |\eta - \psi^{-1}(f)|
= \sqrt{(\eta - \psi^{-1}(f))^2} \nonumber \\
&\leq \sqrt{\frac{2}{\lambda}} \sqrt{\regretc_{\ell}(\eta,f)},
\label{eq:conditional_bound_cost_sensitive_ell}
\end{align}
and the last inequality follows from strong properness (\ref{eq:strong_properness_of_ell}).

To prove the unconditional statement (\ref{eqn:bound_cost_sensitive_ell}), we take expectation with respect to $x$ on both sides of (\ref{eq:conditional_bound_cost_sensitive_ell}):
\begin{align}
\regret_\alpha(h_{f,\theta^*}) 
&= \mathbb{E}_x \left[ \regretc_\alpha(\eta,h_{f,\theta^*}(x)) \right] \nonumber \\
\text{(by (\ref{eq:conditional_bound_cost_sensitive_ell}))}\qquad 
&\leq \sqrt{\frac{2}{\lambda}} \mathbb{E}_x \left[ \sqrt{\regretc_{\ell}(\eta(x),f(x))}\right] \nonumber \\
&\leq \sqrt{\frac{2}{\lambda}} \sqrt{\mathbb{E}_x \left[ \regretc_{\ell}(\eta(x),f(x))\right]} \nonumber \\
&=  \sqrt{\frac{2}{\lambda}} \sqrt{\regret_{\ell}(f)},
\label{eq:bound_on_cost_sensitive_regret_by_surrogate_regret}
\end{align}
where the second inequality is from Jensen's inequality applied to the concave
function $x \mapsto \sqrt{x}$.

We note that derivation of (\ref{eq:proof_regret_eta_eta_hat}) follows the steps of the proof of Lemma 4 in \citet{Menon_etal2013}, while (\ref{eq:conditional_bound_cost_sensitive_ell}) and (\ref{eq:bound_on_cost_sensitive_regret_by_surrogate_regret}) were shown in the proof of Theorem 13 by \citet{Agarwal2014}. Hence, the proof is essentially a combination of existing results, which are rederived here for for the sake of completeness.
\end{proof}

\begin{proof}[Proof of Lemma \ref{lemma:main}]
Lemma \ref{lemma:main} immediately follows from Proposition \ref{prop:bound_psi_cost_sensitive} and Proposition \ref{prop:bound_cost_sensitive_ell}.
\end{proof}
Note that the proof actually specifies the exact value of the universal threshold, 
$\theta^* = \psi(\alpha)$, where $\alpha$ is given by (\ref{eq:alpha}).

The bound in Lemma \ref{lemma:main} is unimprovable in a sense that there exist
$f$, $\Psi$, $\ell$, and distribution $\Pr(x,y)$ for which the
bound is tight. To see this, take, for instance, squared error loss $\ell(y,f) = (y-f)^2$
and classification accuracy metric $\Psi(\FP,\FN) = 1-\FP-\FN$. The constants in Lemma
 \ref{lemma:main} are equal to $\gamma = 1$, $C = 2$, and $\lambda = 8$
 (see Table \ref{tbl:performance_metrics}), while
 the optimal threshold is $\theta^* = 0$. The bound then simplifies to
 \[
  \regret_{0/1}(\sgn(f)) \leq \sqrt{\regret_{\mathrm{sqr}}(f)},
 \]
which is known to be tight \citep{Bartlett_etal06}.



\section{Multilabel classification}
\label{sec:multilabel}

In multilabel classification \citep{Dembczynski_et_al_2012a,Puthiya_etal2014,Koyejo15},
the goal is, given an input (feature vector) $x \in X$,
to simultaneously predict the subset $L \subseteq \mathcal{L}$ of the set of $m$
labels $\mathcal{L} = \{\sigma_1,\ldots,\sigma_m\}$. The subset $L$ 
is often called the set of relevant (positive) labels, 
while the complement $\mathcal{L} \setminus L$ is considered as irrelevant (negative) for $x$. 
We identify a set $L$ of relevant labels with a vector $\vecy = (y_1, y_2, \ldots, y_m)$, $y_i \in \{-1,1\}$, in which $y_i = 1$ iff $\sigma_i \in L$. 
%
We assume observations $(x,\vecy)$ are generated i.i.d. according to $\Pr(x,\vecy)$ (note that the labels are not assumed to be independent).
A \emph{multilabel classifier}:
\[
\vech(x)= (h_1(x), h_2(x), \ldots , h_m(x)),
\]
is a mapping $\vech \colon X \to \{-1,1\}^m$, which assigns a (predicted) label subset to each instance $x \in X$. 
For any $i=1,\ldots,m$, the function $h_i(x)$ is thus a binary classifier, which can be evaluated
by means of $\TP_i(h_i)$,$\FP_i(h_i)$,$\TN_i(h_i)$ and $\FN_i(h_i)$, which are true/false positives/negatives
defined with respect to label $y_i$, e.g. $\FP_i(h_i) = \Pr(h_i(x) = 1 \land y_i = -1)$.

Let $f_1,\ldots,f_m$ be a set of real-valued functions $f_i \colon X \to \mathbb{R}$, $i=1,\ldots,m$, and 
let $\ell$ be a $\lambda$-strongly proper composite loss for binary classification. For each $i=1,\ldots,m$,
we let $\risk^i_{\ell}(f_i)$ and
$\regret^i_{\ell}(f_i)$ denote the $\ell$-risk and the $\ell$-regret of function $f_i$ with respect to label $y_i$:
\[
 \risk^i_{\ell}(f_i) = \mathbb{E}_{(x,y_i)} \left[\ell(y_i,f_i(x)) \right],
 \qquad
  \regret^i_{\ell}(f_i) =  \risk^i_{\ell}(f_i) - \min_f \risk^i_{\ell}(f).
\]
Note that the problem has been decomposed into $m$ independent binary problems and the functions
can be obtained by training $m$ independent real-valued binary classifiers by minimizing loss $\ell$
on the training sample, one for each out of $m$ labels.

What follows next depends on the way in which the binary classification performance metric
is applied in the multilabel setting. We consider two ways of turning binary classification metric
into multilabel metric:
the macro-averaging and the micro-averaging \citep{retrieval,Puthiya_etal2014,Koyejo15}. 

\subsection{Macro-averaging}
\label{sec:macro-averaging}

Given a binary classification performance metric $\Psi(h) = \Psi(\FP(h),\FN(h)$, and a multilabel
classifier $\vech$, we define the macro-averaged metric $\Psi_{\mathrm{macro}}(\vech)$ as:
\[
 \Psi_{\mathrm{macro}}(\vech) = \frac{1}{m} \sum_{i=1}^m \Psi(h_i) = \frac{1}{m} \sum_{i=1}^m \Psi(\FP_i(h_i),\FN_i(h_i)).
\]
The macro-averaging is thus based on
first computing the performance metric separately for each label, and then
averaging the metrics over the labels. The $\Psi_{\mathrm{macro}}$-regret is then defined as:
\[
 \regret_{\Psi_{\mathrm{macro}}}(\vech) 
 = \Psi_{\mathrm{macro}}(\vech_{\Psi}^*) - \Psi_{\mathrm{macro}}(\vech)
 = \frac{1}{m} \sum_{i=1}^m \left(\Psi(h^*_{\Psi,i}) - \Psi(h_i) \right),
\]
where $\vech_{\Psi}^* = (h^*_{\Psi,1},\ldots,h^*_{\Psi,m})$ is the $\Psi$-optimal multilabel classifier:
\[
 h^*_{\Psi,i} = \argmax_h \Psi(\FP_i(h),\FN_i(h)), \qquad i=1,\ldots,m.
\]
Since the regret decomposes into a weighted sum, it is straightforward
to apply previously derived bound to obtain a regret bound for macro-averaged performance metric.

\begin{theorem}
Let $\Psi(\FP,\FN)$ and $\ell$ satisfy the assumptions of Lemma \ref{lemma:main}.
For a set of $m$ real-valued functions $\{ f_i \colon X \to \mathbb{R} \}_{i=1}^m$,
 let $\theta^*_{f_i} = \argmax_{\theta} \Psi(h_{f_i,\theta})$ for each $i=1,\ldots,m$.
Then the classifier $\vech$ defined as:
\[
 \vech = (h_{f_1,\theta^*_{f_1}}, h_{f_2,\theta^*_{f_2}}, \ldots, h_{f_m,\theta^*_{f_m}}),
\]
achieves the following bound on its $\Psi_{\mathrm{macro}}$-regret:
\[
\regret_{\Psi_{\mathrm{macro}}}(\vech)
 \leq \sqrt{\frac{2}{\lambda}} \frac{1}{m} \sum_{i=1}^m C_i \sqrt{\regret^i_{\ell}(f_i)},
\]
where $C_i = \frac{1}{\gamma}\left(\Psi(h_{\Psi,i}^*)(b_1 + b_2) - (a_1 + a_2)\right)$, $i=1,\ldots,m$.
\label{thm:multilabel_macro}
\end{theorem}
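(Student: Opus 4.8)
The plan is to exploit the fact that the macro-averaged regret is a plain average of the per-label binary $\Psi$-regrets, and then apply Theorem \ref{thm:main} independently to each label. First I would rewrite the $\Psi_{\mathrm{macro}}$-regret using the decomposition already recorded just above the statement:
\[
\regret_{\Psi_{\mathrm{macro}}}(\vech) = \frac{1}{m}\sum_{i=1}^m \left(\Psi(h^*_{\Psi,i}) - \Psi(h_i)\right) = \frac{1}{m}\sum_{i=1}^m \regret_{\Psi}(h_i),
\]
where $h_i = h_{f_i,\theta^*_{f_i}}$ is the $i$-th component of $\vech$ and $\regret_{\Psi}(h_i)$ is the ordinary binary $\Psi$-regret of $h_i$ computed with respect to the marginal distribution $\Pr(x,y_i)$ of the $i$-th label. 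This reduction is the whole point: the macro scheme was designed so that the aggregate regret is linear in the per-label regrets.

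Next, for each fixed $i$, I would observe that the binary problem associated with label $y_i$ satisfies all the hypotheses of Lemma \ref{lemma:main}: $\Psi$ is the same linear-fractional, non-increasing function, its denominator is bounded below by the same $\gamma$, and $\ell$ is the same $\lambda$-strongly proper composite loss shared across labels. Since $\theta^*_{f_i} = \argmax_\theta \Psi(h_{f_i,\theta})$ is exactly the population-optimal threshold appearing in Theorem \ref{thm:main}, that theorem applies verbatim to label $i$ and yields
\[
\regret_{\Psi}(h_{f_i,\theta^*_{f_i}}) \leq C_i \sqrt{\frac{2}{\lambda}} \sqrt{\regret^i_{\ell}(f_i)},
\]
where the constant is the Lemma \ref{lemma:main} constant specialized to the $i$-th label, $C_i = \frac{1}{\gamma}\left(\Psi(h^*_{\Psi,i})(b_1+b_2) - (a_1+a_2)\right)$. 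Note that only the optimal metric value $\Psi(h^*_{\Psi,i})$ — and hence the optimal threshold — changes from label to label, while $\lambda$, $\gamma$, and the coefficients $a_j,b_j$ are shared, which is why the label index attaches solely to $C_i$.

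Finally I would substitute these $m$ bounds into the decomposition and pull the common factor $\sqrt{2/\lambda}$ out of the sum, obtaining exactly the claimed inequality. The proof is essentially a corollary of Theorem \ref{thm:main}, so I do not expect a genuine obstacle; the only points demanding care are bookkeeping in nature — confirming that the hypotheses of Lemma \ref{lemma:main} indeed hold for each per-label marginal (in particular that a single $\gamma$ lower-bounds every per-label denominator and a single $\lambda$ governs the shared loss), and keeping the label-dependent constant $C_i$ attached to the correct summand rather than collapsing it to a uniform $C$.
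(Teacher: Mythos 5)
Your proposal is correct and matches the paper's proof exactly: the paper likewise notes that $\regret_{\Psi_{\mathrm{macro}}}$ decomposes into the average of per-label $\Psi$-regrets and then applies Theorem \ref{thm:main} once per label before averaging the resulting bounds. Your additional bookkeeping remarks (a single $\gamma$ and $\lambda$ shared across labels, with only $C_i$ depending on the label through $\Psi(h^*_{\Psi,i})$) are accurate and consistent with the paper's statement.
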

\begin{proof}
The theorem follows from applying Theorem \ref{thm:main} once for each label, and then averaging the bounds over the labels.
\end{proof}

Theorem \ref{thm:multilabel_macro} suggests a straightforward decomposition into $m$ 
independent binary classification
problems, one for each label $y_1,\ldots,y_m$, and running (independently for each problem) the two-step procedure
described in Section \ref{sec:main_result}: For $i=1,\ldots,m$, we learn 
a function $f_i$ with small $\ell$-regret with respect to label $y_i$,
and tune the threshold $\theta^*_{f_i}$ to optimize $\Psi(h_{f_i,\theta})$
(similarly as in the binary classification case, one can show that tuning the threshold
on a separate validation sample is sufficient).
Due to decomposition of $\Psi_{\mathrm{macro}}$
into the sum over the labels, this simple procedure turns out to be sufficient.
As we shall see, the case of micro-averaging becomes more interesting.

\subsection{Micro-averaging}
\label{sec:micro-averaging}

Given a binary classification performance metrics $\Psi(h) = \Psi(\FP(h),\FN(h))$, and a multilabel
classifier $\vech$, we define the micro-averaged metric $\Psi_{\mathrm{micro}}(\vech)$ as:
\[
 \Psi_{\mathrm{micro}}(\vech) = \Psi(\overline{\FP}(\vech),\overline{\FN}(\vech)),
\]
where:
\[
 \overline{\FP}(\vech) = \frac{1}{m} \sum_{i=1}^m \FP_i(h_i), \qquad
 \overline{\FN}(\vech) = \frac{1}{m} \sum_{i=1}^m \FN_i(h_i).
\]
Thus, in the micro-averaging, the false positives
and false negatives are first averaged over the labels, 
and then the performance metric is calculated on these averaged
quantities.
The $\Psi_{\mathrm{micro}}$-regret:
\[
 \regret_{\Psi_{\mathrm{micro}}}(\vech) 
 = \Psi_{\mathrm{micro}}(\vech_{\Psi}^*) - \Psi_{\mathrm{micro}}(\vech),
 \quad
 \text{where} \; \vech_{\Psi}^* = \argmax_{\vech} \Psi_{\mathrm{micro}}(\vech),
\]
does not decompose into the sum over labels anymore. However, we are still able
to obtain a regret bound, reusing the techniques from Section \ref{sec:proof},
and, interestingly, this time 
only a single threshold needs to be tuned and is shared among all labels.
\footnote{The fact that a single threshold is sufficient for consistency of micro-averaged
performance measures was
already noticed by \citet{Koyejo15}.}

\begin{theorem}
Let $\Psi(\FP,\FN)$ and $\ell$ satisfy the assumptions of Lemma \ref{lemma:main}.
For a set of $m$ real-valued functions $\{ f_i \colon X \to \mathbb{R} \}_{i=1}^m$,
 let $\theta^*_f = \argmax_{\theta} \Psi_{\mathrm{micro}}(\vech_{f,\theta})$,
where:
\[
 \vech_{f,\theta} = (h_{f_1,\theta}, h_{f_2,\theta}, \ldots, h_{f_m,\theta}).
\]
Then, the classifier $\vech_{f,\theta^*_f} = (h_{f_1,\theta^*_f}, \ldots, h_{f_m,\theta^*_f})$ achieves the following bound on its $\Psi_{\mathrm{micro}}$-regret:
\[
\regret_{\Psi_{\mathrm{micro}}}(\vech_{f,\theta^*_f})
 \leq \sqrt{\frac{2}{\lambda}} \frac{C}{m} \sum_{i=1}^m  \sqrt{\regret^i_{\ell}(f_i)},
\]
where $C = \frac{1}{\gamma}\left(\Psi_{\mathrm{micro}}(\vech_{\Psi}^*)(b_1 + b_2) - (a_1 + a_2)\right)$.
\label{thm:multilabel_micro}
\end{theorem}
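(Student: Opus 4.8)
The plan is to replicate the two-step structure of the proof of Lemma~\ref{lemma:main}, adapting Propositions~\ref{prop:bound_psi_cost_sensitive} and~\ref{prop:bound_cost_sensitive_ell} to the micro-averaged setting. The crucial observation is that the $\Psi_{\mathrm{micro}}$-regret can be bounded by an \emph{average} of per-label cost-sensitive regrets that all share a \emph{single} cost parameter $\alpha$; this is exactly the structural fact that permits one shared threshold for all labels.

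First I would introduce the shorthands $B = b_0 + b_1 \overline{\FP}(\vech) + b_2 \overline{\FN}(\vech)$ and $A = a_0 + a_1 \overline{\FP}(\vech) + a_2 \overline{\FN}(\vech)$, with starred versions $A^*, B^*$ evaluated at $\vech^*_\Psi$; note that $B = \frac{1}{m}\sum_i \big(b_0 + b_1 \FP_i(h_i) + b_2 \FN_i(h_i)\big) \geq \gamma$ follows by linearity from the per-label denominator bound. Repeating the algebra of~(\ref{eq:derivation_regret_psi}) \emph{verbatim}, the only change being that the scalars $\FP,\FN$ are replaced by the label-averages $\overline{\FP},\overline{\FN}$, yields
\[
\regret_{\Psi_{\mathrm{micro}}}(\vech) \leq C \Big[ \alpha\big(\overline{\FP}(\vech) - \overline{\FP}(\vech^*_\Psi)\big) + (1-\alpha)\big(\overline{\FN}(\vech) - \overline{\FN}(\vech^*_\Psi)\big) \Big],
\]
where $\alpha$ is defined by~(\ref{eq:alpha}) with $\Psi^*$ replaced by $\Psi_{\mathrm{micro}}(\vech^*_\Psi)$. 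As in Proposition~\ref{prop:bound_psi_cost_sensitive}, the monotonicity of $\Psi$ together with $B^* \geq \gamma$ guarantees $\alpha \in [0,1]$.

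The key step is then to unfold the averages. Since $\overline{\FP}(\vech) = \frac{1}{m}\sum_{i=1}^m \FP_i(h_i)$ and likewise for $\overline{\FN}$, the bracketed expression equals $\frac{1}{m}\sum_{i=1}^m \big(\risk^i_\alpha(h_i) - \risk^i_\alpha(h^*_{\Psi,i})\big)$, where $\risk^i_\alpha$ is the cost-sensitive risk with respect to label $i$ at the \emph{common} cost $\alpha$, and $h^*_{\Psi,i}$ is the $i$-th component of $\vech^*_\Psi$. Lower-bounding each $\risk^i_\alpha(h^*_{\Psi,i})$ by the per-label cost-sensitive optimum $\risk^i_\alpha(h^{*,i}_\alpha)$, with $h^{*,i}_\alpha = \argmin_h \risk^i_\alpha(h)$, can only increase the bound, giving $\regret_{\Psi_{\mathrm{micro}}}(\vech) \leq \frac{C}{m}\sum_{i=1}^m \regret^i_\alpha(h_i)$, the micro-averaged analogue of Proposition~\ref{prop:bound_psi_cost_sensitive}.

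Finally I would fix the single shared threshold $\theta^* = \psi(\alpha)$ and apply Proposition~\ref{prop:bound_cost_sensitive_ell} to each label separately---legitimate because the proposition holds for any fixed $\alpha$ and here $\alpha$ is identical across labels---obtaining $\regret^i_\alpha(h_{f_i,\theta^*}) \leq \sqrt{2/\lambda}\,\sqrt{\regret^i_\ell(f_i)}$. Summing over $i$ bounds $\regret_{\Psi_{\mathrm{micro}}}(\vech_{f,\theta^*})$ by the desired right-hand side, and since $\theta^*_f$ maximizes $\Psi_{\mathrm{micro}}(\vech_{f,\theta})$ we have $\regret_{\Psi_{\mathrm{micro}}}(\vech_{f,\theta^*_f}) \leq \regret_{\Psi_{\mathrm{micro}}}(\vech_{f,\theta^*})$, closing the argument. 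The main obstacle is conceptual rather than computational: one must recognize that because $\Psi$ is applied \emph{after} averaging the error rates, its linearization at the optimum produces a \emph{single} cost $\alpha$, so that $\theta^* = \psi(\alpha)$ is simultaneously correct for all $m$ labels---precisely the reason a shared threshold suffices here, in contrast to the macro-averaged case where each label carries its own $\alpha_i$ and threshold.
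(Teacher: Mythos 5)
Your proposal is correct and follows essentially the same route as the paper's own proof: redo the algebra of (\ref{eq:derivation_regret_psi}) at the averaged quantities $\overline{\FP},\overline{\FN}$ to get a single cost parameter $\alpha$, unfold the average into per-label cost-sensitive regrets, apply Proposition \ref{prop:bound_cost_sensitive_ell} label-by-label with the shared threshold $\theta^*=\psi(\alpha)$, and finish with the $\argmax$ comparison. Your explicit remark that $B \geq \gamma$ for the averaged quantities follows by linearity from the per-label denominator bound is a small point the paper leaves implicit, but otherwise the arguments coincide.
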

\begin{proof}
The proof follows closely the proof of Lemma \ref{lemma:main}. In fact, only 
Proposition \ref{prop:bound_psi_cost_sensitive} requires modifications,
which are given below.
Take any real values
$\FP,\FN$ and $\FP^*,\FN^*$ (to be specified later) in the domain of $\Psi$, such that:
\begin{equation}
\Psi(\FP^*,\FN^*) - \Psi(\FP,\FN) \geq 0.
\label{eq:positivity_condition}
\end{equation}
Using exactly the same steps as in the derivation (\ref{eq:derivation_regret_psi}),
we obtain:
\[
 \Psi(\FP^*,\FN^*) - \Psi(\FP,\FN) \leq C \left( \alpha (\FP-\FP^*) + (1-\alpha) (\FN - \FN^*) \right),
\]
where: 
\begin{align*}
C &= \frac{1}{\gamma}\left(\Psi(\FP^*,\FN^*) (b_1 + b_2) - (a_1 + a_2)\right),\\
\alpha &= \frac{\Psi(\FP^*,\FN^*)b_1 - a_1}{\Psi(\FP^*,\FN^*)(b_1 + b_2) - (a_1 + a_2)}.
\end{align*}
Now, we take: $\FP^* = \overline{\FP}(\vech_{\Psi}^*), \FN^* = \overline{\FN}(\vech_{\Psi}^*)$,
$\FP = \overline{\FP}(\vech)$ and $\FN = \overline{\FN}(\vech)$ for some  $\vech$. Hence,
(\ref{eq:positivity_condition}) is clearly satisfied as its left-hand side is just
the $\Psi_{\mathrm{micro}}$-regret, $\regret_{\Psi_{\mathrm{micro}}}(\vech)$. 
This means that for any multilabel classifier $\vech$:
\begin{align*}
 \regret_{\Psi_{\mathrm{micro}}}(\vech) &\leq 
 C \left( \alpha (\overline{\FP}(\vech)- \overline{\FP}(\vech_{\Psi}^*)) + (1-\alpha) (\overline{\FN}(\vech) - \overline{\FN}(\vech_{\Psi}^*)) \right) \\
 &= \frac{C}{m} \sum_{i=1}^m \alpha (\FP_i(h_i)- \FP_i(h_{\Psi,i}^*)) + (1-\alpha) (\FN_i(h_i) - \FN_i(h_{\Psi,i}^*)) \\
 &= \frac{C}{m} \sum_{i=1}^m \left(\risk^i_{\alpha}(h_i) - \risk^i_{\alpha}(h_{\Psi,i}^*) \right) \\
 &\leq \frac{C}{m} \sum_{i=1}^m \regret^i_{\alpha}(h_i),
\end{align*}
where $\risk^i_{\alpha}(h_i)$ and $\regret^i_{\alpha}(h_i)$ are the cost-sensitive risk and the cost sensitive regret defined with respect to
label $y_i$:
\[
\risk^i_{\alpha}(h_i) = \mathbb{E}_{(x,y_i)}[\ell_\alpha(y_i,h_i(x))], \qquad
\regret^i_{\alpha}(h_i) = \risk^i_\alpha(h_i) - \min_{h} \risk^i_\alpha(h).
\]
If we now take $h_i = h_{f,\theta^*}$, where $\theta^* = \psi(\alpha)$, $\psi$ being the link function of the loss,
Proposition \ref{prop:bound_cost_sensitive_ell} (applied for each $i=1,\ldots,m$ separately) implies:
\[
\regret^i_{\alpha}(h_{f_i,\theta^*}) \leq \sqrt{\frac{2}{\lambda}} \sqrt{\regret^i_{\ell}(f_i)} .
\]
Together, this gives:
\[
\regret_{\Psi_{\mathrm{micro}}}(\vech_{f,\theta^*})
\leq \sqrt{\frac{2}{\lambda}} \frac{C}{m} \sum_{i=1}^m  \sqrt{\regret^i_{\ell}(f_i)}.
\]
The theorem now follows by noticing that:
\[
\theta^*_f = \argmax_{\theta} \Psi_{\mathrm{micro}}(\vech_{f,\theta}) = \argmin_{\theta} \regret_{\Psi_{\mathrm{micro}}}(\vech_{f,\theta}), 
\]
and thus $\regret_{\Psi_{\mathrm{micro}}}(\vech_{f,\theta^*_f}) \leq \regret_{\Psi_{\mathrm{micro}}}(\vech_{f,\theta^*})$.
\end{proof}

Theorem \ref{thm:multilabel_micro} suggests a decomposition into $m$ 
independent binary classification
problems, one for each label $y_1,\ldots,y_m$, and training $m$ real-valued classifiers
$f_1,\ldots,f_m$ with small $\ell$-regret on the corresponding label. Then, however, contrary to
macro-averaging,
a single threshold, shared among all labels, is tuned by optimizing $\Psi_{\mathrm{micro}}$ on a separate validation
sample.

\section{Empirical results}
\label{sec:experiment}

We perform experiments on synthetic and benchmark data to empirically study
the two-step procedure analyzed in the previous sections. 
To this end, we minimize a surrogate loss in the first step to obtain 
a real-valued function $f$, and 
in the second step, we tune a threshold $\hat{\theta}$
on a separate validation set to optimize a given 
performance metric. 
We use logistic loss in this procedure as a surrogate loss. Recall that logistic loss is $4$-strongly proper composite (see Table~\ref{tbl:strongly_proper_composite_losses}). We compare its performance with \emph{hinge loss}, which is even \emph{not} a proper composite function. 
%
As our task performance metrics, we take the F-measure ($F_{\beta}$-measure with $\beta=1$)
and the AM measure (which is a special case of Weighted Accuracy with weights $w_1 = P$ and $w_2 = 1-P$).
We could also use the Jaccard similarity coefficient;
it turns out, however, that the threshold optimized for the F-measure 
coincides with the optimal threshold for the Jaccard similarity coefficient
(this is because the Jaccard similarity coefficient is strictly monotonic in the F-measure and vice versa), 
so the latter measure does not give anything substantially different than the F-measure.

The experiments on benchmark data are split into two parts. The first part concerns binary classification problems, while the second part multi-label classification. 

The purpose of this study is \emph{not} about comparing
the two-step approach with alternative methods; this has already
been done in the previous work on the subject,
see, e.g.,
\citep{Ye2012,Puthiya_etal2014}.
We also note that similar experiments have been performed in the cited papers on the statistical consistency 
of generalized performance metrics  
\citep{Natarajan_etal2014,Narasimhan_etal2014,Puthiya_etal2014, Koyejo15}. 
Therefore, we unavoidably repeat some of the results obtained therein, but the main novelty of the experiments reported here is that we emphasize the difference between strongly proper composite losses and non-proper losses.

\subsection{Synthetic data}

We performed two experiments on synthetic data. The first
experiment deals with a discrete domain in which we learn within
a class of all possible classifiers. The second experiment
concerns continuous domain in which we learn within a restricted class of
linear functions.

\paragraph{First experiment.}
We let the input domain $X$ to be a finite set, consisting of $25$ elements,
 $X=\{1,2,\ldots,25\}$,
and take $\Pr(x)$ to be uniform over $X$, i.e. $\Pr(x=i)=1/25$. 
For each $x \in X$, we randomly draw a value of $\eta(x)$ 
from the uniform distribution on the interval $[0,1]$.
In the first step,
we take an algorithm which minimizes a given surrogate loss $\ell$ within the class
of \emph{all} function $f \colon X \to \mathbb{R}$. 
Hence,
given the training data of size $n$, the algorithm
computes the empirical minimizer of surrogate loss $\ell$ independently
for each $x$.
As surrogate losses, we use logistic and hinge loss. 
In the second step, 
we tune the threshold $\hat\theta$ on a separate validation set, also of size $n$.
For each $n$, we repeat the procedure 100,000 times, averaging over samples and over models
(different random choices of $\eta(x)$).
We start with $n=100$ and increase the number of training examples up to $n=10,000$.
The $\ell$-regret and $\Psi$-regret can be easily computed, as the distribution is known
and $X$ is discrete. 

The results are given in Fig.~\ref{fig:synthetic_data-1}.
The $\ell$-regret goes down to zero for both surrogate losses, which is expected,
since this is the objective function minimized by the algorithm. 
Minimization of logistic loss (left plot) 
gives vanishing $\Psi$-regret for both the F-measure and the AM measure, as predicted by Theorem \ref{thm:main}.
In contrast, minimization of the hinge loss (right plot) is suboptimal for both task metrics
and gives non-zero $\Psi$-regret even in the limit $n \to \infty$.
This behavior can easily be explained by the fact that hinge loss is
not a proper (composite) loss: the risk minimizer for hinge loss is
given by $f^*_{\ell}(x) = \mathrm{sgn}(\eta(x)-1/2)$ \citep{Bartlett_etal06}.
Hence, the hinge loss minimizer is already a threshold function on $\eta(x)$, with the threshold
value set to $1/2$. If, for a given performance metric $\Psi$,
the optimal threshold $\theta^*$ is
different than $1/2$, the hinge loss minimizer will necessarily have suboptimal $\Psi$-risk.
This is clearly visible for the F-measure.
The better result on the AM measure
is explained by the fact that the average optimal threshold over all models is $0.5$ for this measure,
so the minimizer of hinge loss is
not that far from the minimizer of AM measure.
\begin{figure}[t]
\begin{center}
\begin{tabular}{cc}
\includegraphics[width = 0.475\textwidth]{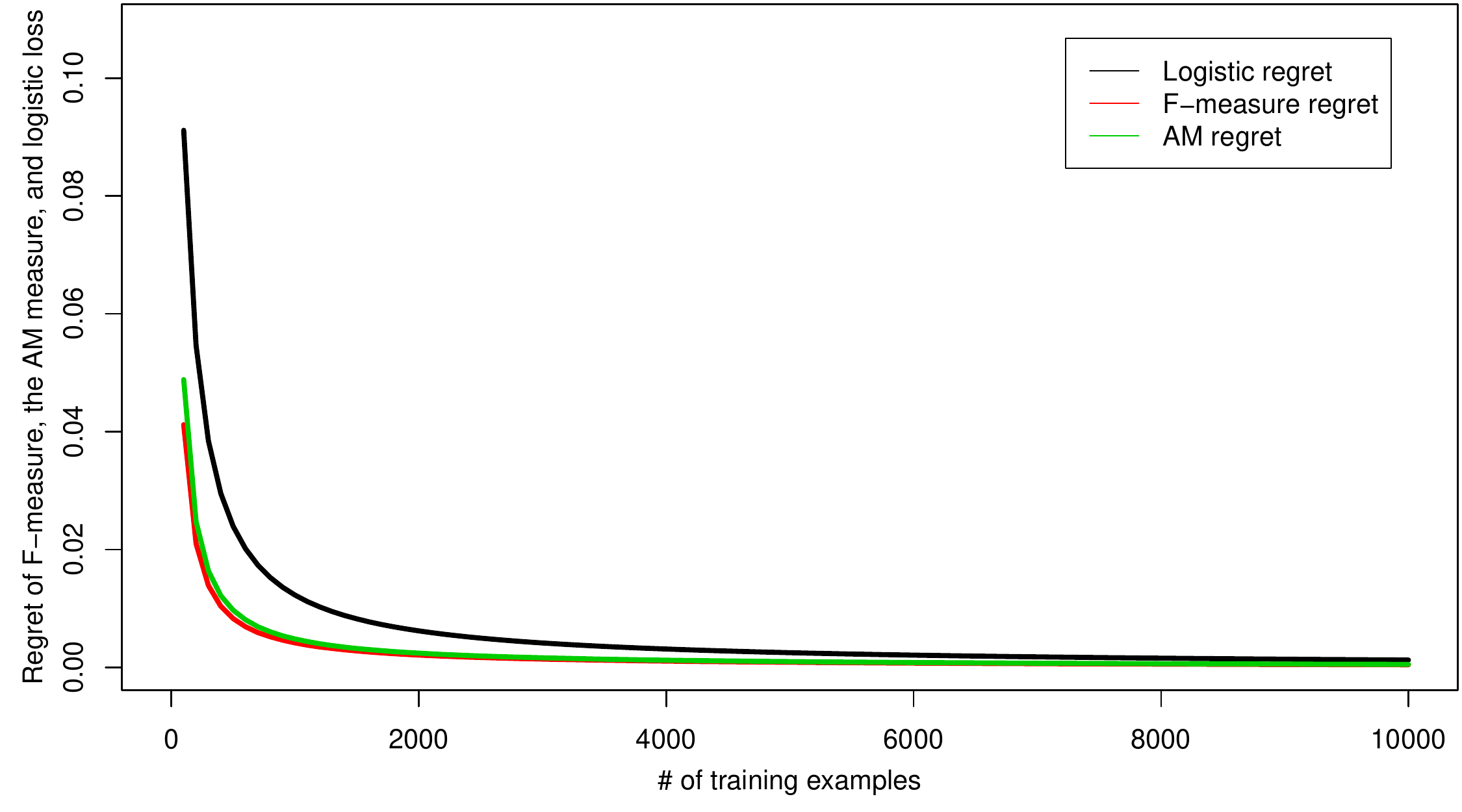} &
\includegraphics[width = 0.475\textwidth]{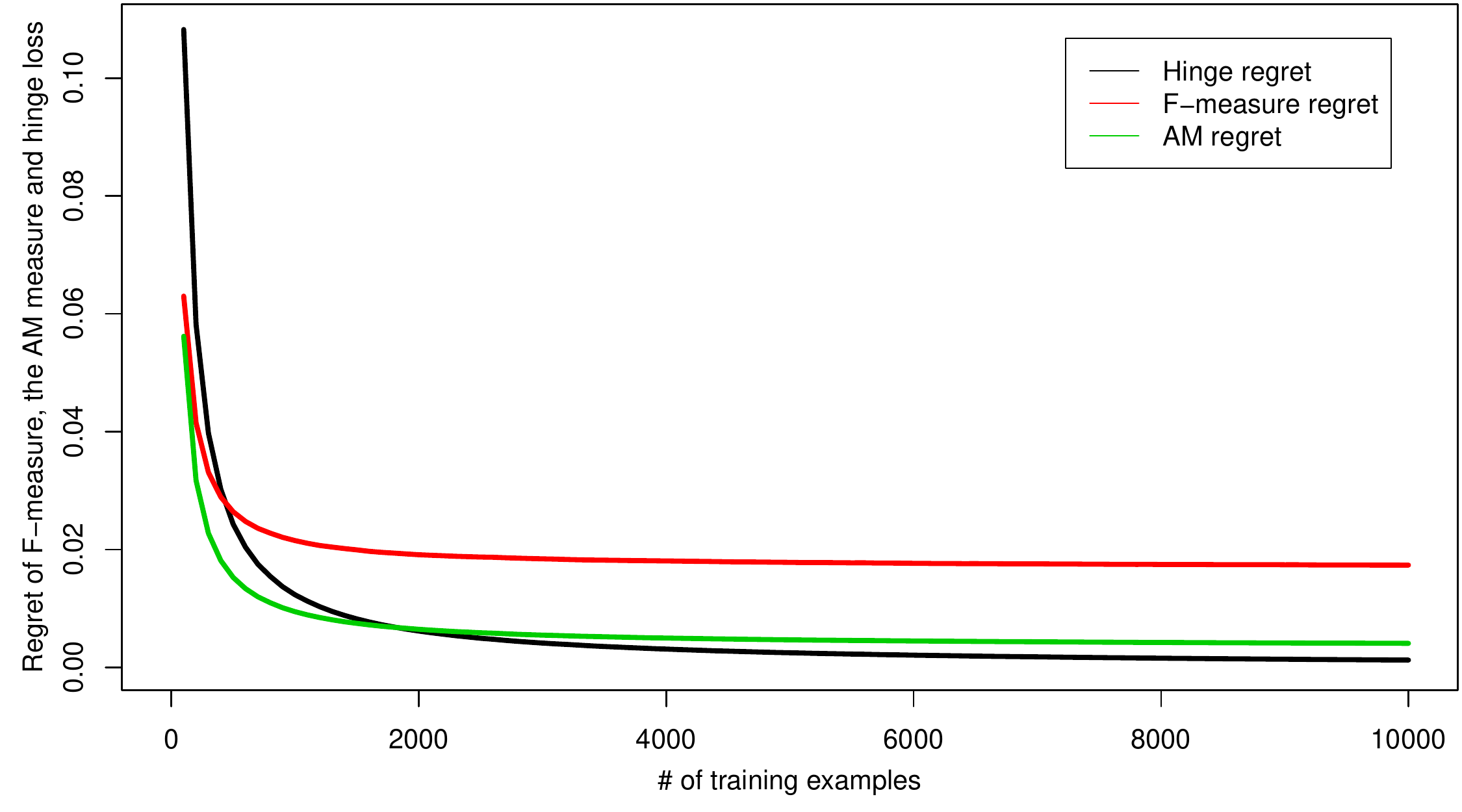} \\
\end{tabular}
\caption{Regret (averaged over 100,000 repetitions)
on the discrete synthetic model as a function
of the number of training examples. 
Left panel: logistic loss is used as a surrogate loss. 
Right panel: hinge loss is used as surrogate loss.}
\label{fig:synthetic_data-1}
\end{center}
\end{figure}
\begin{figure}[t]
\begin{center}
\begin{tabular}{cc}
\includegraphics[width = .475\textwidth]{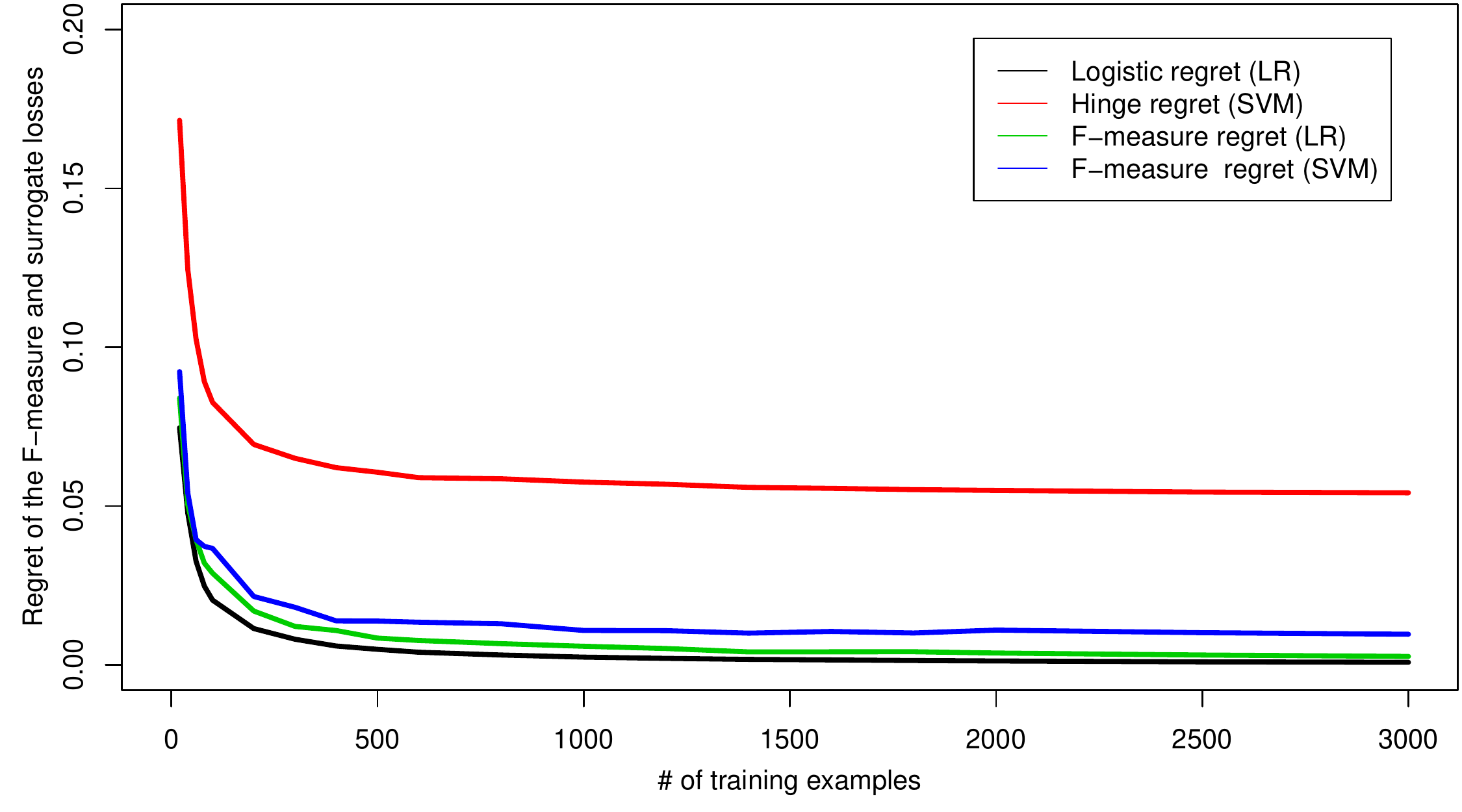} & \includegraphics[width = .475\textwidth]{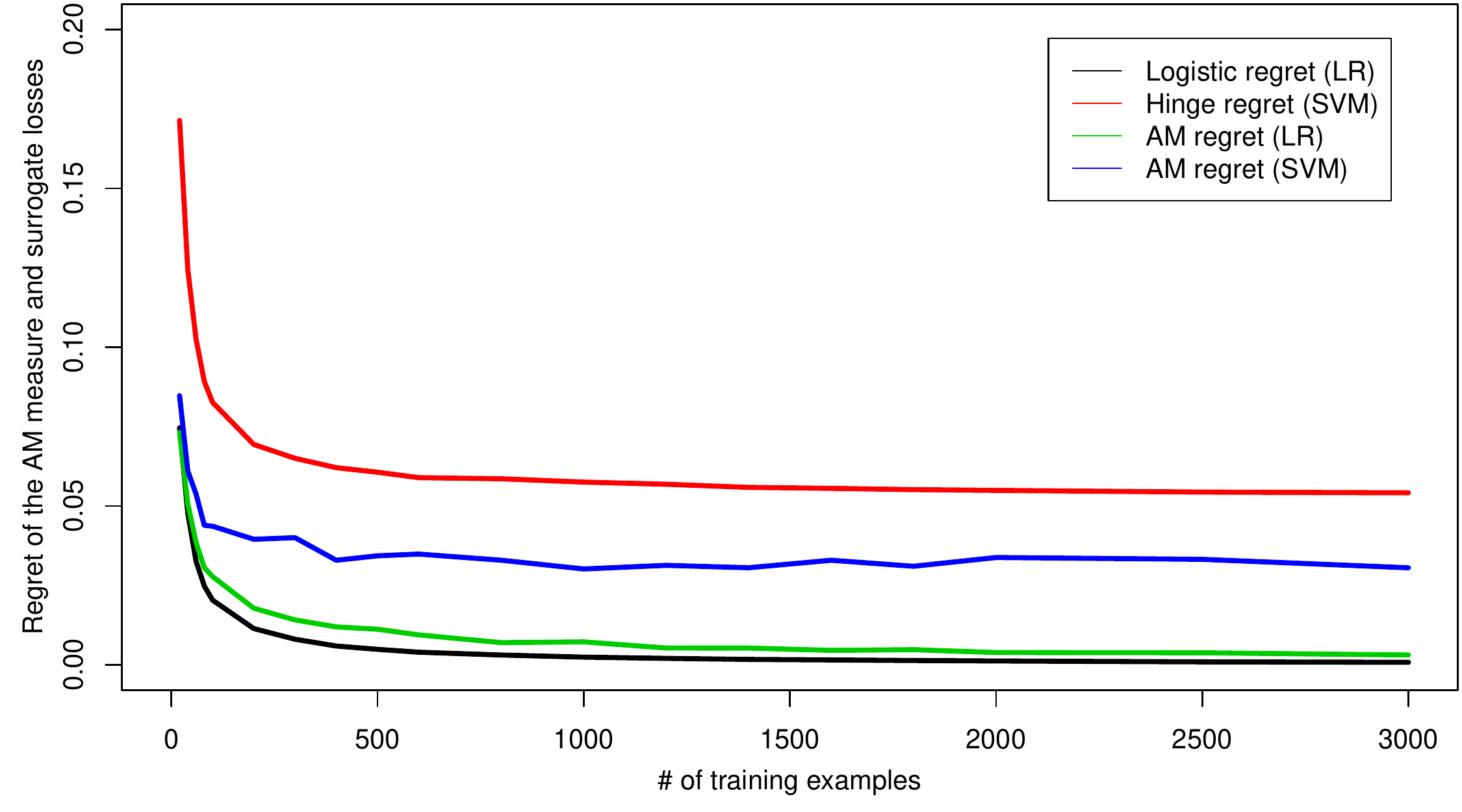} \\
\end{tabular}

\caption{Regret (averaged over 20 x 20 = 400 repetitions)
on the logistic model as a function of the number of training examples. 
Left panel: regret with respect to the F-measure and surrogate losses. 
Right panel: regret with respect to the AM measure and surrogate losses.}
\label{fig:synthetic_data-2}
\end{center}
\end{figure}
\paragraph{Second experiment.}
We take $X = \mathbb{R}^2$ and generate $x \in X$ from a standard Gaussian
distribution. We use a logistic model of the form 
$\eta(x) = \frac{1}{1 + \exp(-a_0 - a^\top x)}$. 
The weights $a = (a_1, a_2)$ and $a_0$ are also drawn from a standard Gaussian.
For a given model (set of weights), we take training sets of increasing size
from $n=100$ up to $n=3000$, using 20 different sets for each $n$.
We also generate one test set of size 100,000.
For each $n$, we use 2/3 of the training data 
to learn a linear model $f(x) = w_0 + w^\top x$,
using either support vector machines (SVM, with linear kernel) or logistic regression (LR). 
We use implementation of these algorithms from the LibLinear package~\citep{liblinear}.\footnote{Software available at \url{http://www.csie.ntu.edu.tw/~cjlin/liblinear}} 
The remaining 1/3 of the training data is used for tuning the threshold. 
We average the results over 20 different models.

The results are given in Fig.~\ref{fig:synthetic_data-2}. 
As before,
we plot the average $\ell$-regret for logistic and hinge loss, and $\Psi$-regret 
for the F-measure and the AM measure. 
The results obtained for LR (logistic loss minimizer) 
agree with our theoretical analysis: 
the $\ell$-regret and $\Psi$-regret with respect to both F-measure and AM measure
go to zero.
This is expected, as the data generating model is a linear logistic model
(so that the risk
minimizer for logistic loss is a linear function),
and thus coincides with a class of functions over which we optimize.
The situation is different for SVM (hinge loss minimizer).
Firstly, the $\ell$-regret for hinge loss does not converge to zero.
This is because
 the risk minimizer for hinge loss is a threshold function $\mathrm{sgn}(\eta(x) - 1/2)$,
and it is not possible to approximate such a function with linear model
$f(x) = w_0 + w^\top x$. Hence, even when $n \to \infty$,
the empirical hinge loss minimizer (SVM) does not converge to the risk minimizer.
This behavior, however, can be \emph{advantageous} for 
SVM in terms of the task performance measures. This is because the risk minimizer
for hinge loss, a threshold function on $\eta(x)$ with the threshold value $1/2$,
will perform poorly, for example, in terms of the F-measure and AM measure, for which the optimal threshold $\theta^*$ is usually very different from $1/2$.
In turn, the linear model constraint will prevent convergence to the risk minimizer,
and the resulting linear function $f(x) = w_0 + w^\top x$ will often be close to some reversible
function of $\eta(x)$; hence after tuning the threshold, we will often end up close to the minimizer of a given task performance measure. 
This is seen for the F-measure on the left panel in Fig.~\ref{fig:synthetic_data-2}. In this case, the F-regret of SVM gets quite close to zero, but is still worse than LR. 
The non-vanishing regret is mainly caused by the fact that for some models with imbalanced class priors,
SVM reduce weights $w$ to zero and sets the intercept $w_0$ to $1$ or $-1$,
predicting the same value for all $x \in X$ (this is not caused by a software problem,
it is how the empirical loss minimizer behaves). Interestingly, the F-measure is only slightly  affected by this pathological behavior of empirical hinge loss minimizer. In turn, the AM measure, for which the plots are drawn in the right panel in Fig.~\ref{fig:synthetic_data-2}, is not robust against this behavior of SVM: predicting the majority class actually results in the value of AM measure equal to $1/2$, a very poor performance, which is on the same level as random classifier.



\subsection{Benchmark data for binary classification}
\begin{figure*}[t!]
\begin{center}
\begin{tabular}{cc}
\texttt{covtype.binary} & \texttt{gisette} \\[3pt]
\includegraphics[width = .45\textwidth]{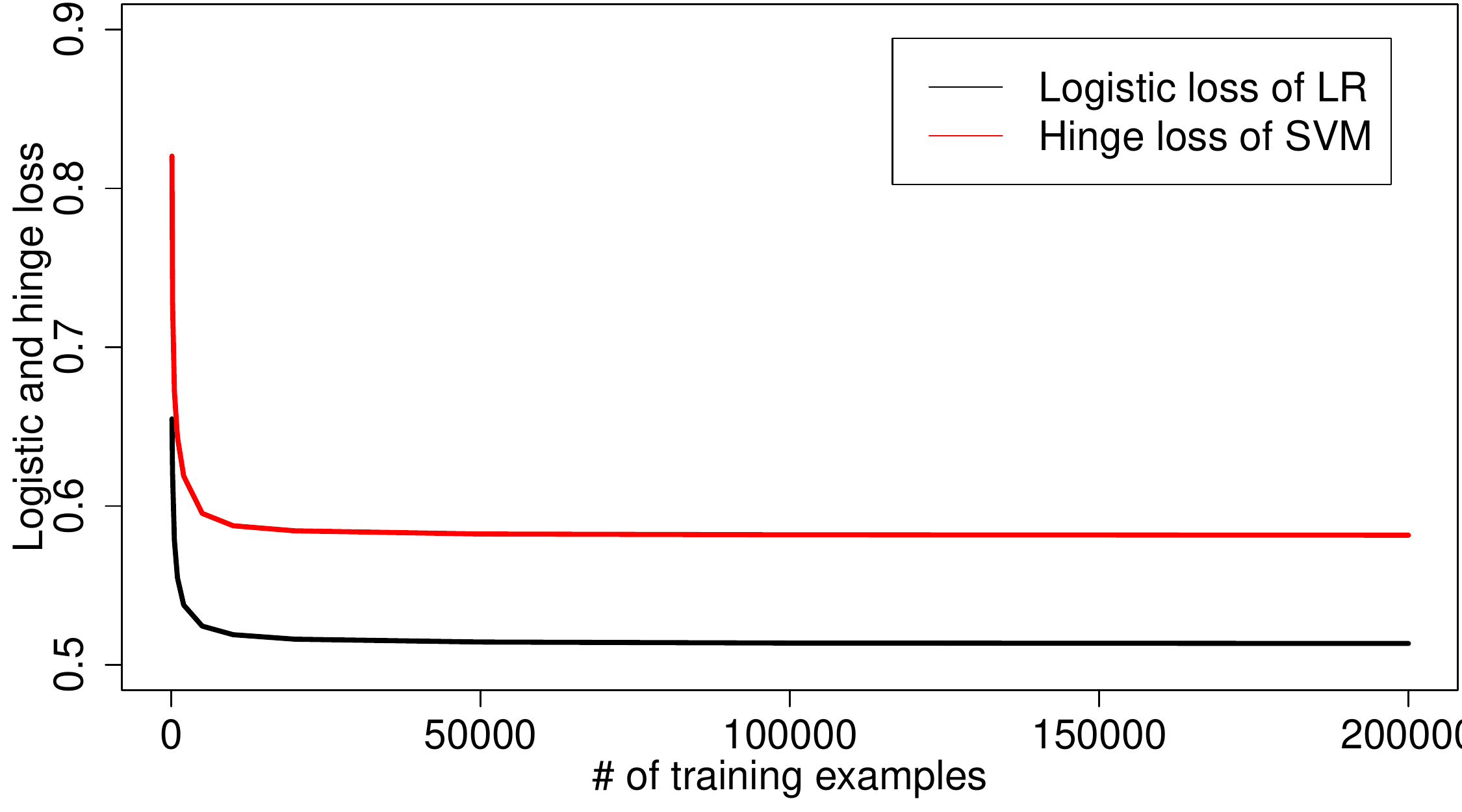} & 
\includegraphics[width = .45\textwidth]{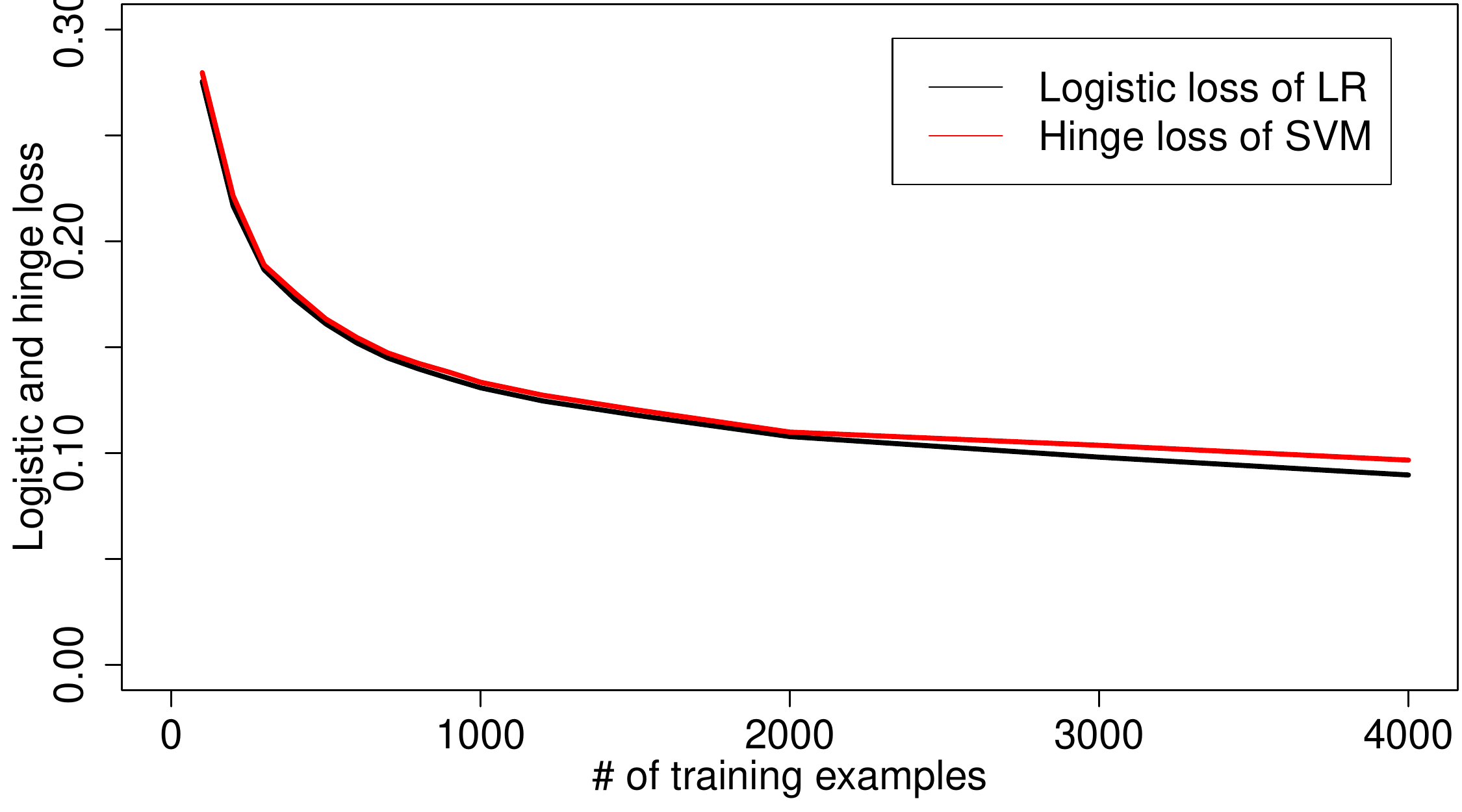} \\
\includegraphics[width = .45\textwidth]{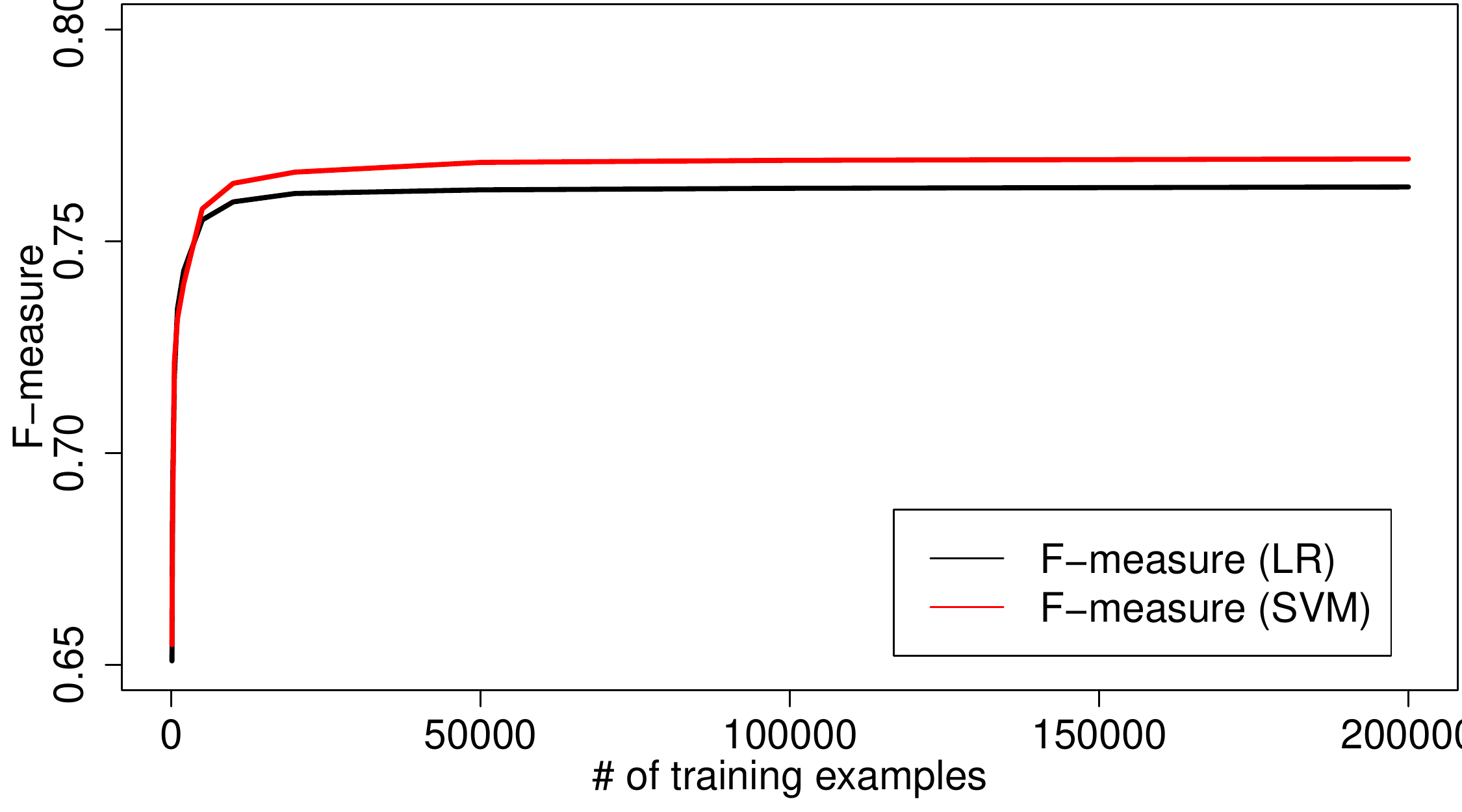} & 
\includegraphics[width = .45\textwidth]{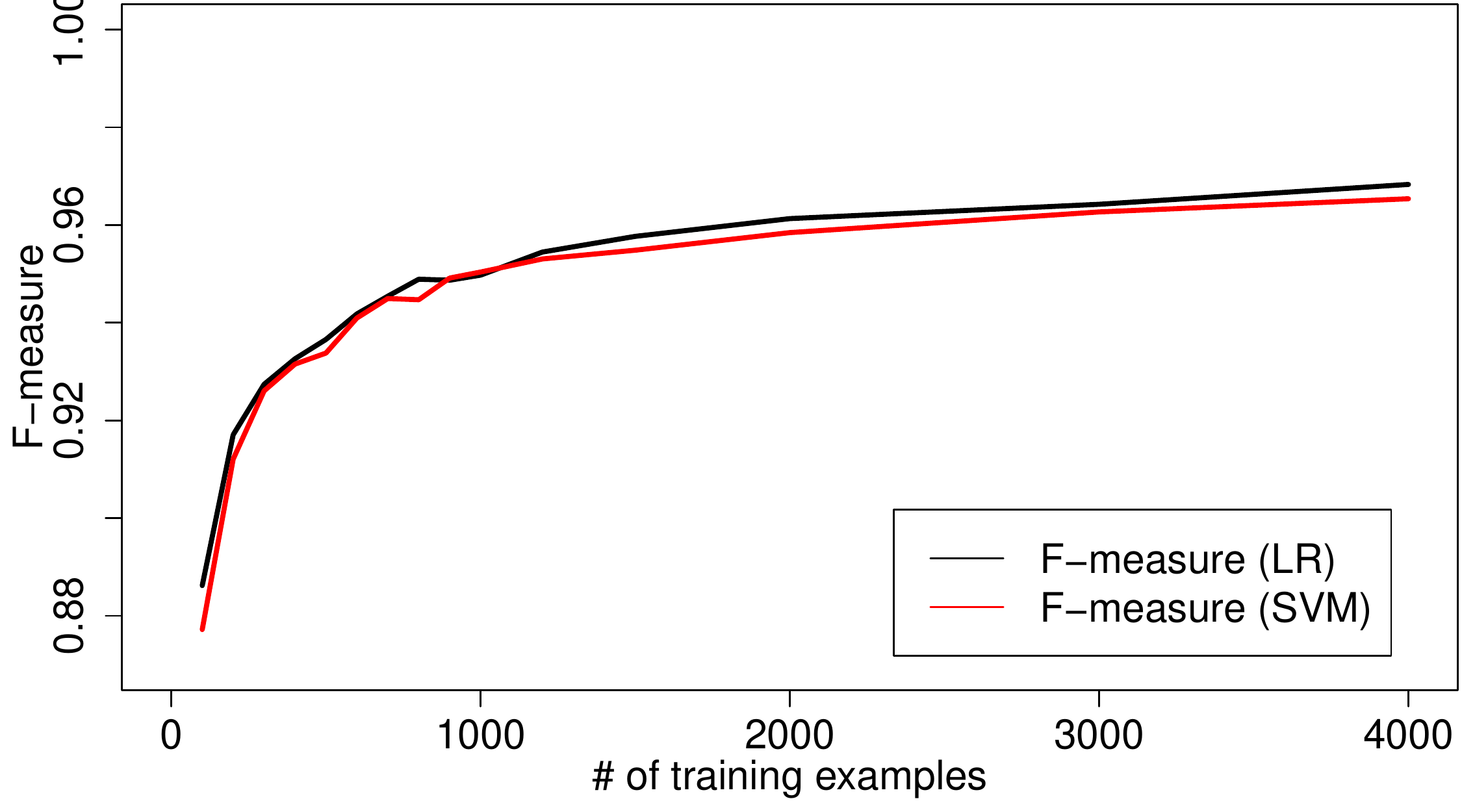} \\\includegraphics[width = .45\textwidth]{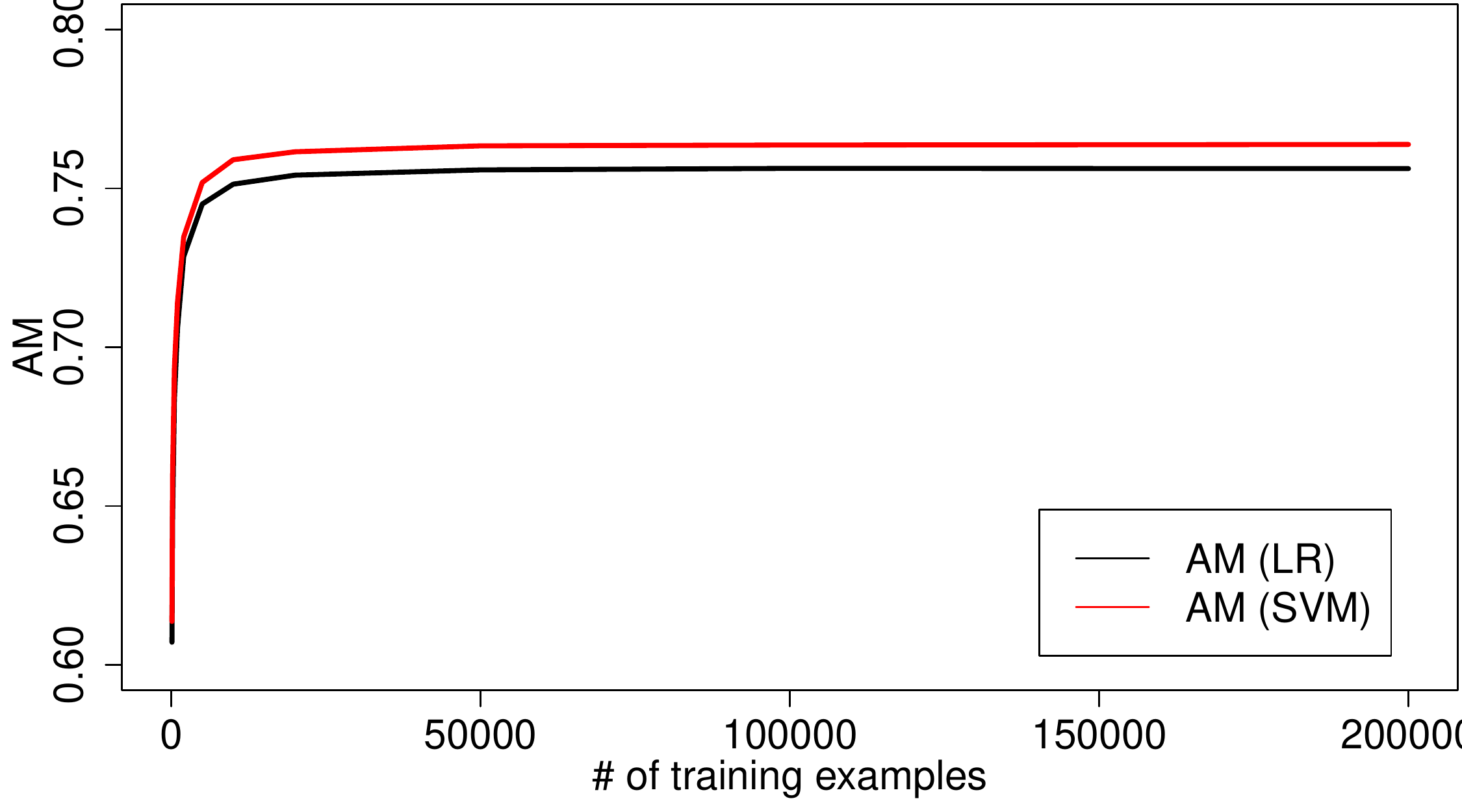} &
\includegraphics[width = .45\textwidth]{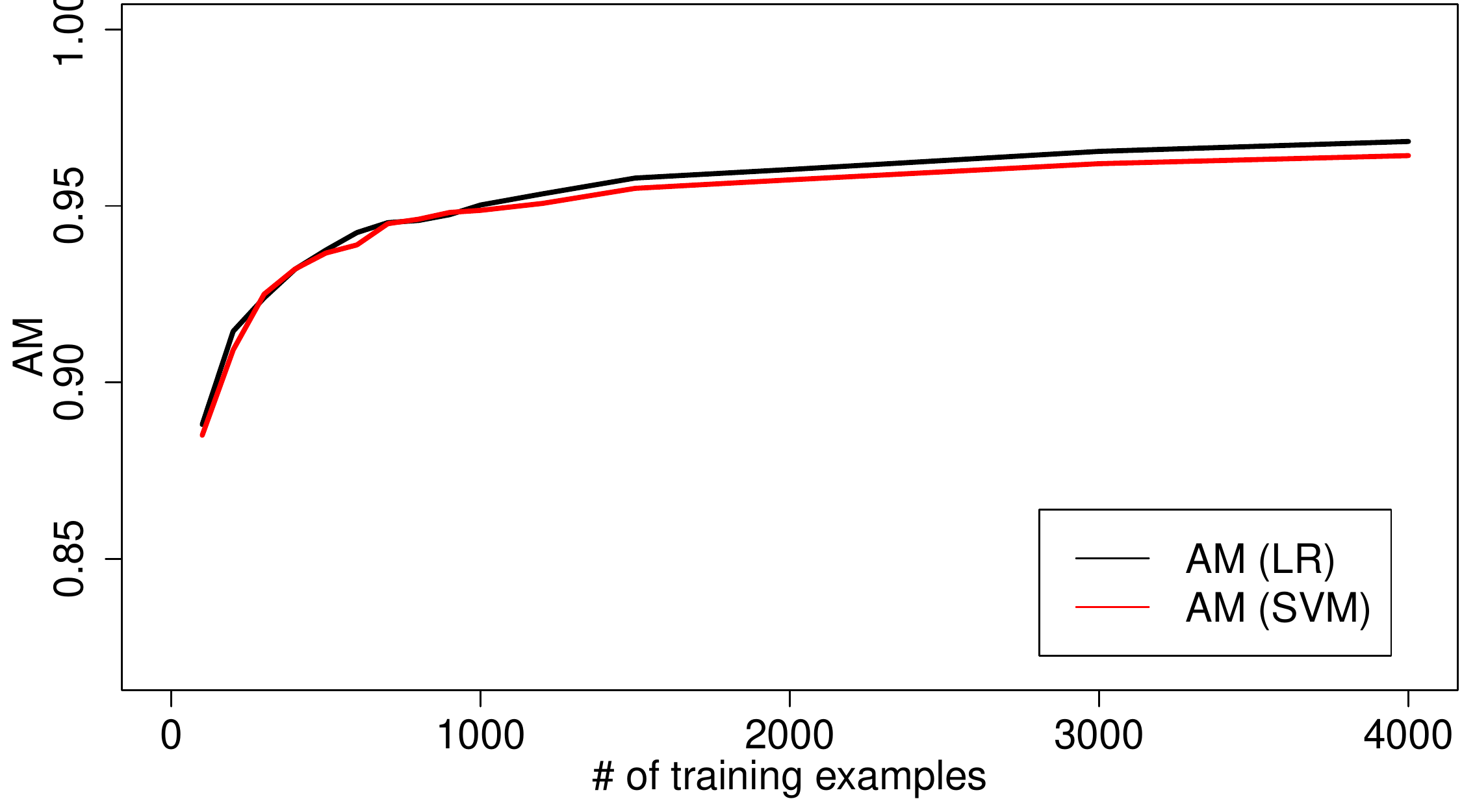} \\
\end{tabular}
\caption{Average test set performance on benchmark data sets as a function of the number of training examples. 
Left panel: \texttt{covtype} dataset. 
Right panel: the \texttt{gisette} dataset. The top plots show logistic and hinge loss, 
the center plots show the F-measure, 
the bottom plots show the AM measure.}
\label{fig:binary_benchmark_results}
\end{center}
\end{figure*}
\begin{table}[t]
\begin{center}
\begin{tabular}{l r r}
\toprule
dataset & \#examples & \#features \\
\midrule
\texttt{covtype}  & 581,012 & 54 \\ 
\texttt{gisette} & 7,000 & 5,000 \\
\bottomrule
\end{tabular}
\end{center}
\caption{Basic statistics for binary classification benchmark datasets}
\label{tbl:binary_data_sets}
\end{table}
The next experiment is performed on two binary benchmark datasets,\footnote{
Datasets are taken from LibSVM repository: 
\url{http://www.csie.ntu.edu.tw/~cjlin/libsvmtools/datasets}}
described in Table~\ref{tbl:binary_data_sets}. 
We randomly take out a test set of size 181,012 for \texttt{covtype}, and of
size 3,000 for \texttt{gisette}. We use the remaining examples for training.
As before, we incrementally increase the size of the training set.
We use 2/3 of training examples for learning linear model with SVM or LR, and the rest 
for tuning the threshold. We repeat the experiment (random train/validation/test split) 20 times.
The results are plotted 
in Fig~\ref{fig:binary_benchmark_results}. 
Since the data distribution is unknown, we are unable to compute the risk minimizers,
hence we plot the average loss/metric on the test set rather than the regret.
The results show that SVM perform better on the \texttt{covtype} dataset, 
while LR performs better on the \texttt{gisette} dataset. However, 
there is very little difference in performance 
of SVM and LR in terms of the F-measure and the AM measure on these data sets.
We suspect this is due to the fact that $\eta(x)$ function is very different
from linear for these problems, so that neither LR nor SVM converge to the $\ell$-risk
minimizer, and Theorem \ref{thm:main} does not apply. Further studies would be required
to understand the behavior of surrogate losses in this case.

\subsection{Benchmark data for multi-label classification}

\begin{figure*}
\begin{center}
\vspace*{-20pt}
\begin{tabular}{ccc}
\midrule
\multirow{2}{*}{\begin{sideways}\large{\texttt{scene}}\end{sideways}} &
\includegraphics[width = .37\textwidth]{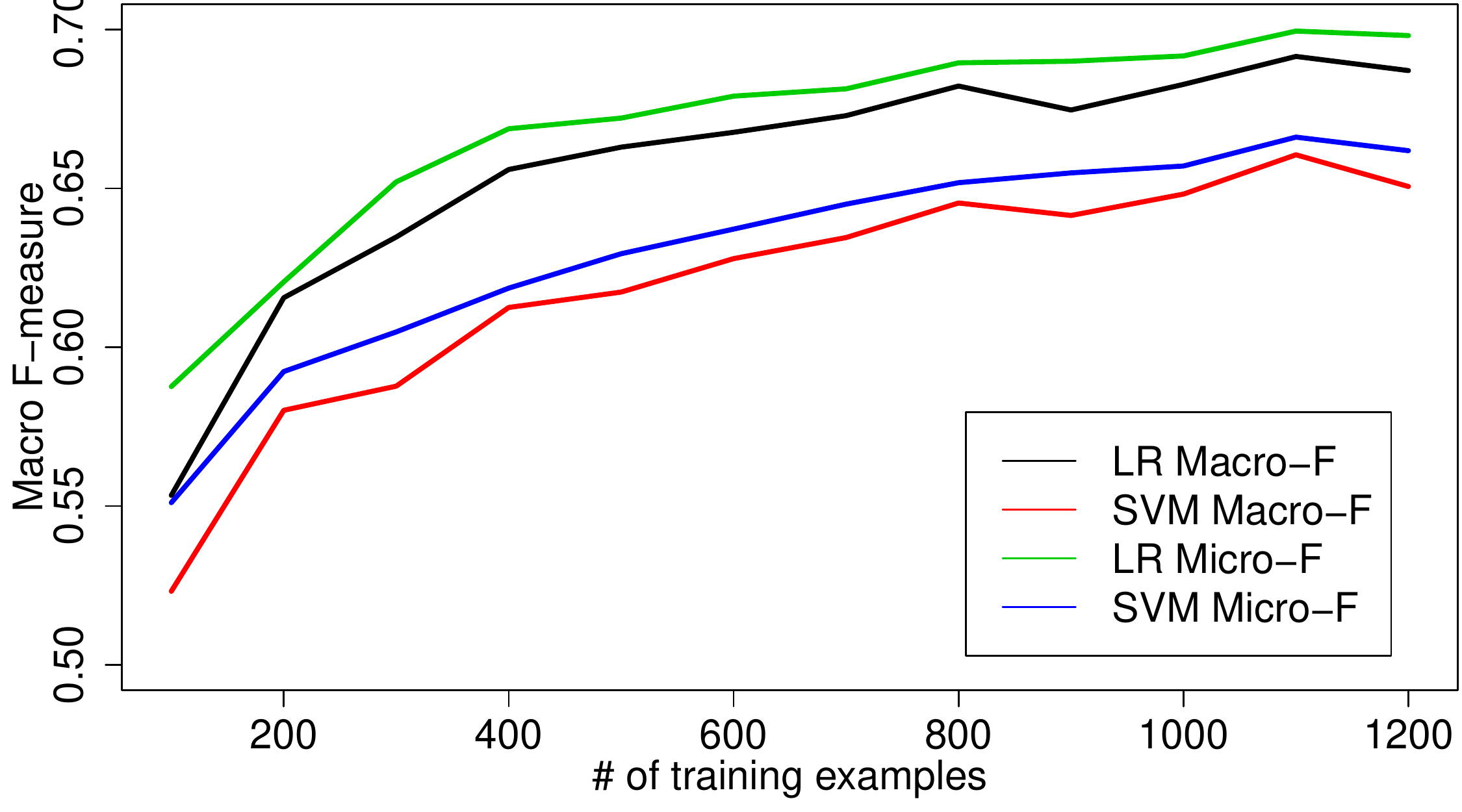} & 
\includegraphics[width = .37\textwidth]{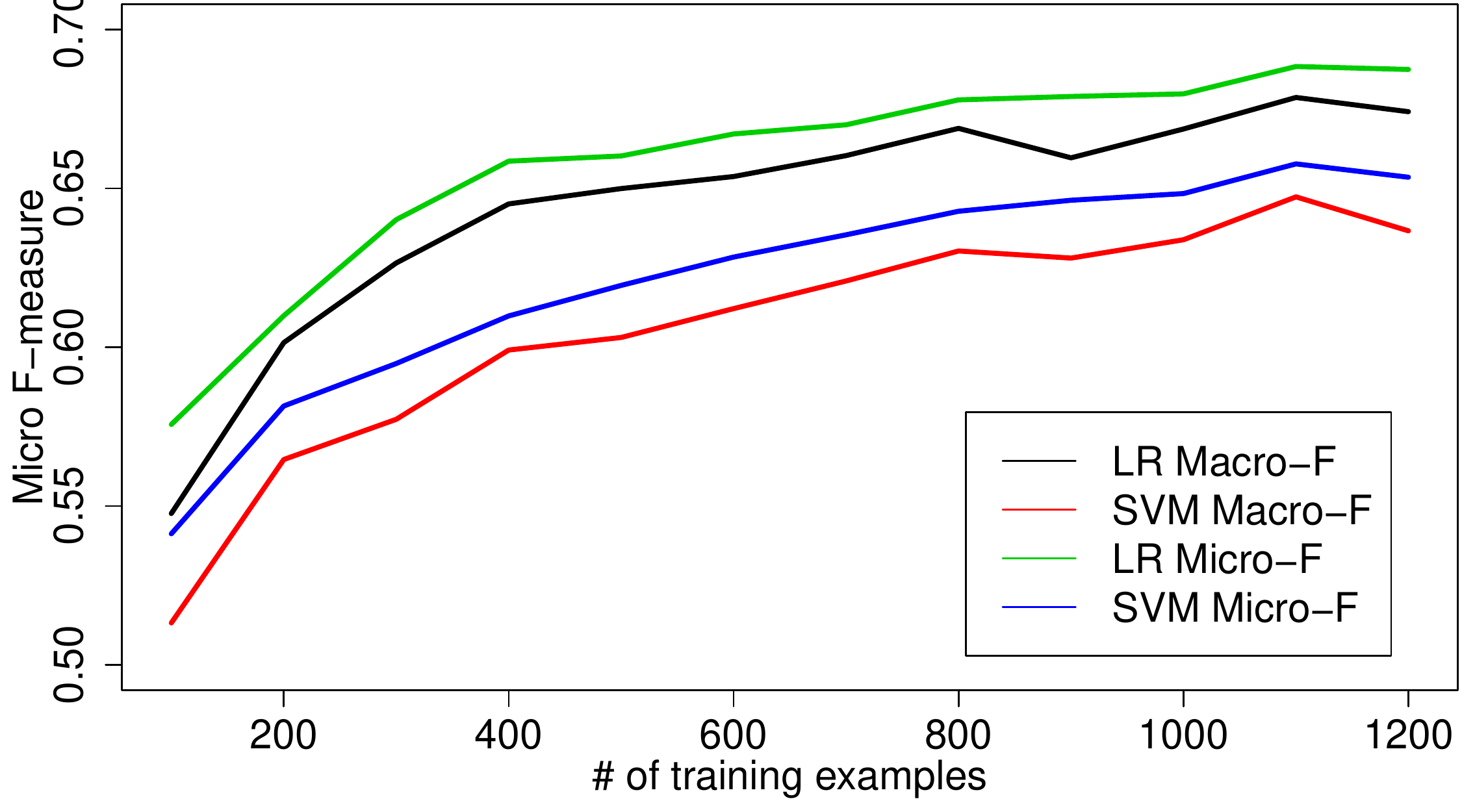} \\
& \includegraphics[width = .37\textwidth]{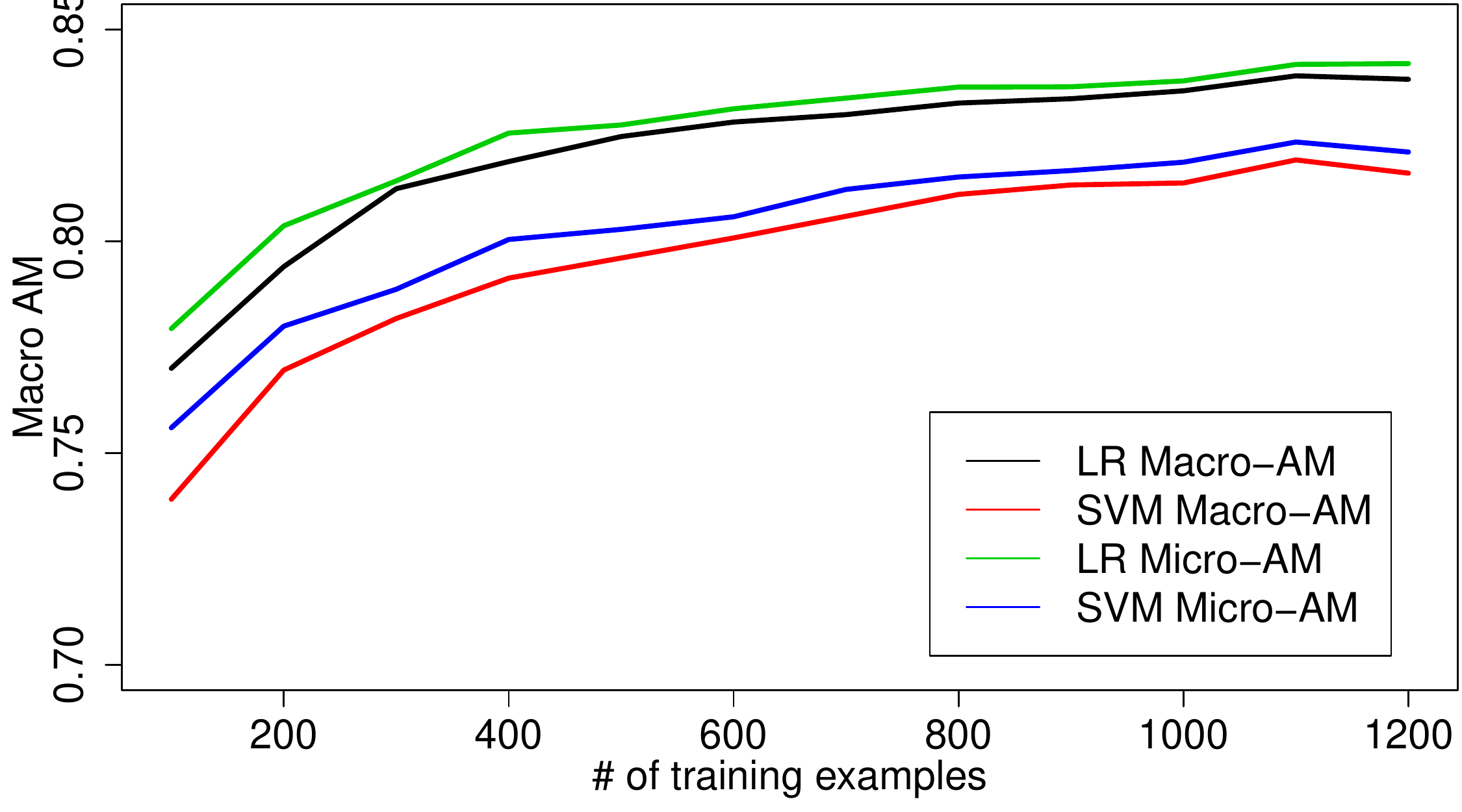} & 
\includegraphics[width = .37\textwidth]{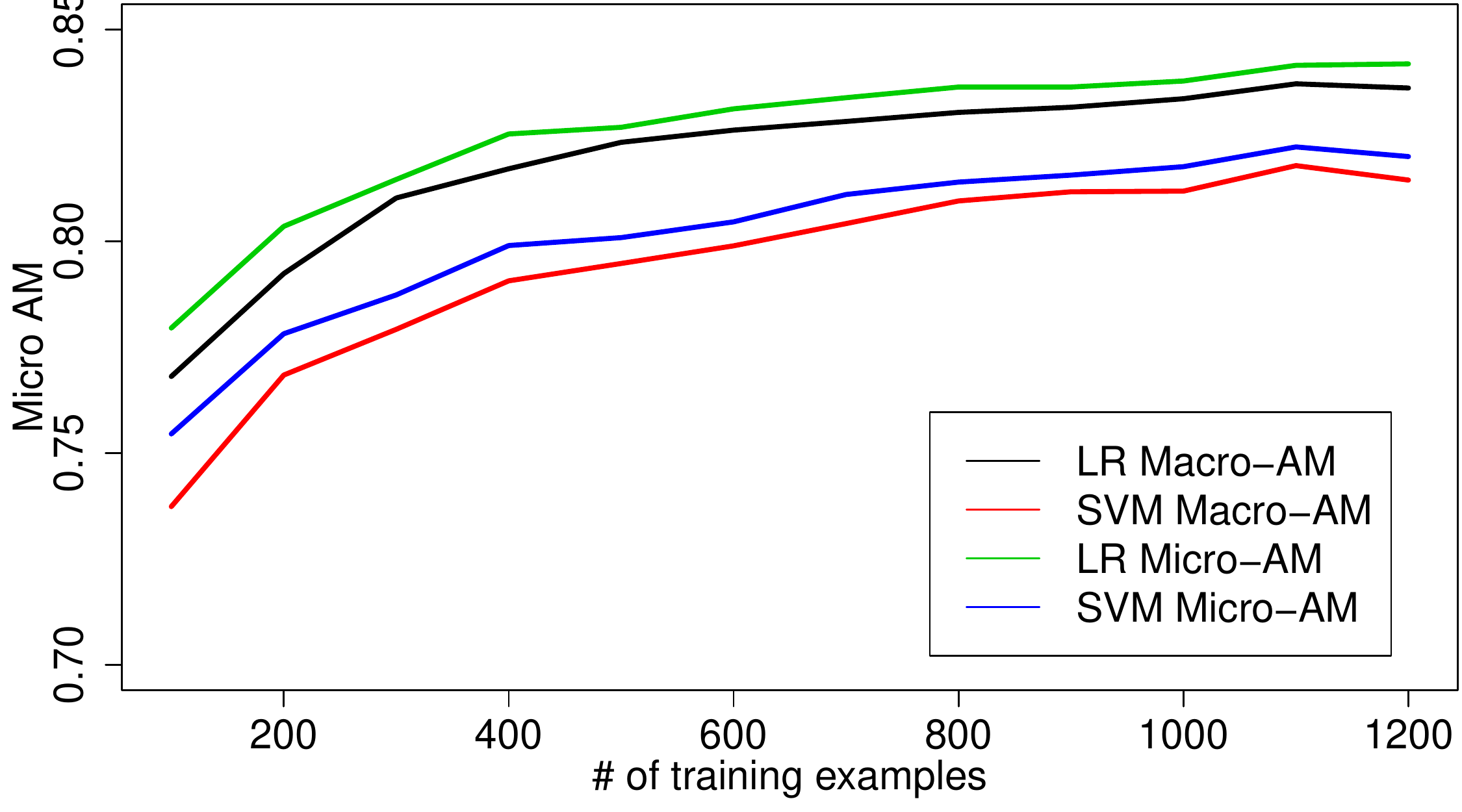} \\
\midrule
\multirow{2}{*}{\begin{sideways}\large{\texttt{yeast}}\end{sideways}} &
\includegraphics[width = .37\textwidth]{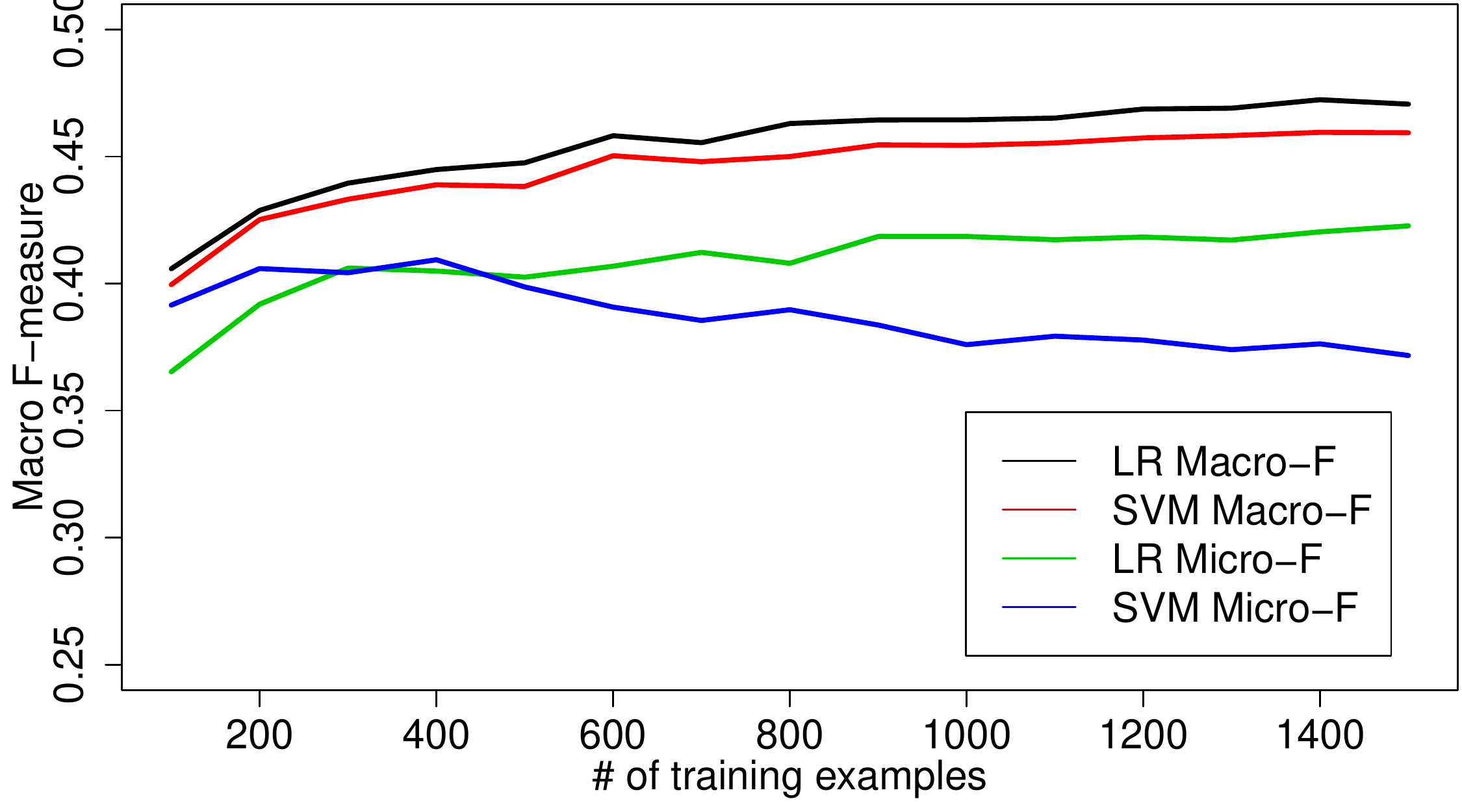} & 
\includegraphics[width = .37\textwidth]{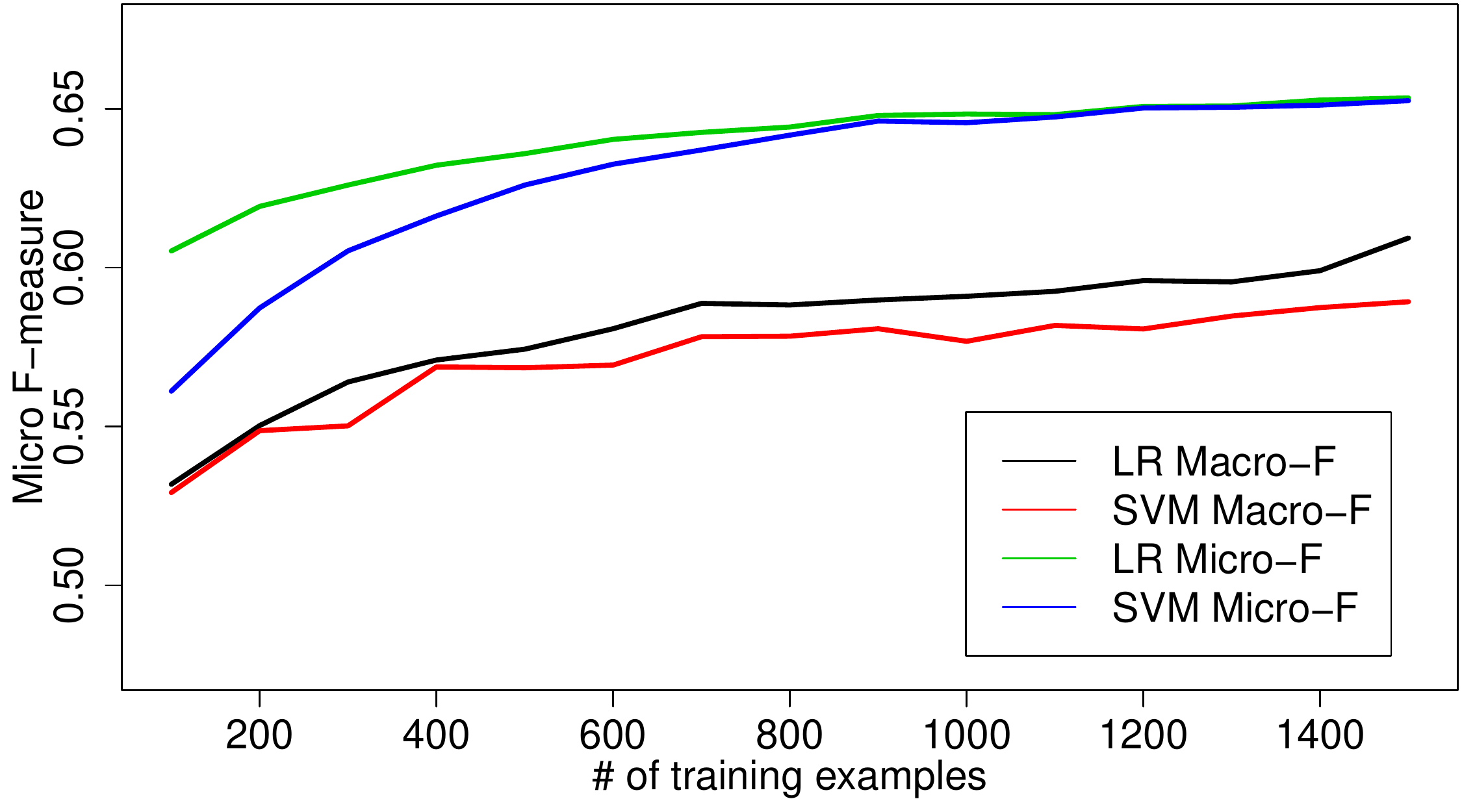} \\
& \includegraphics[width = .37\textwidth]{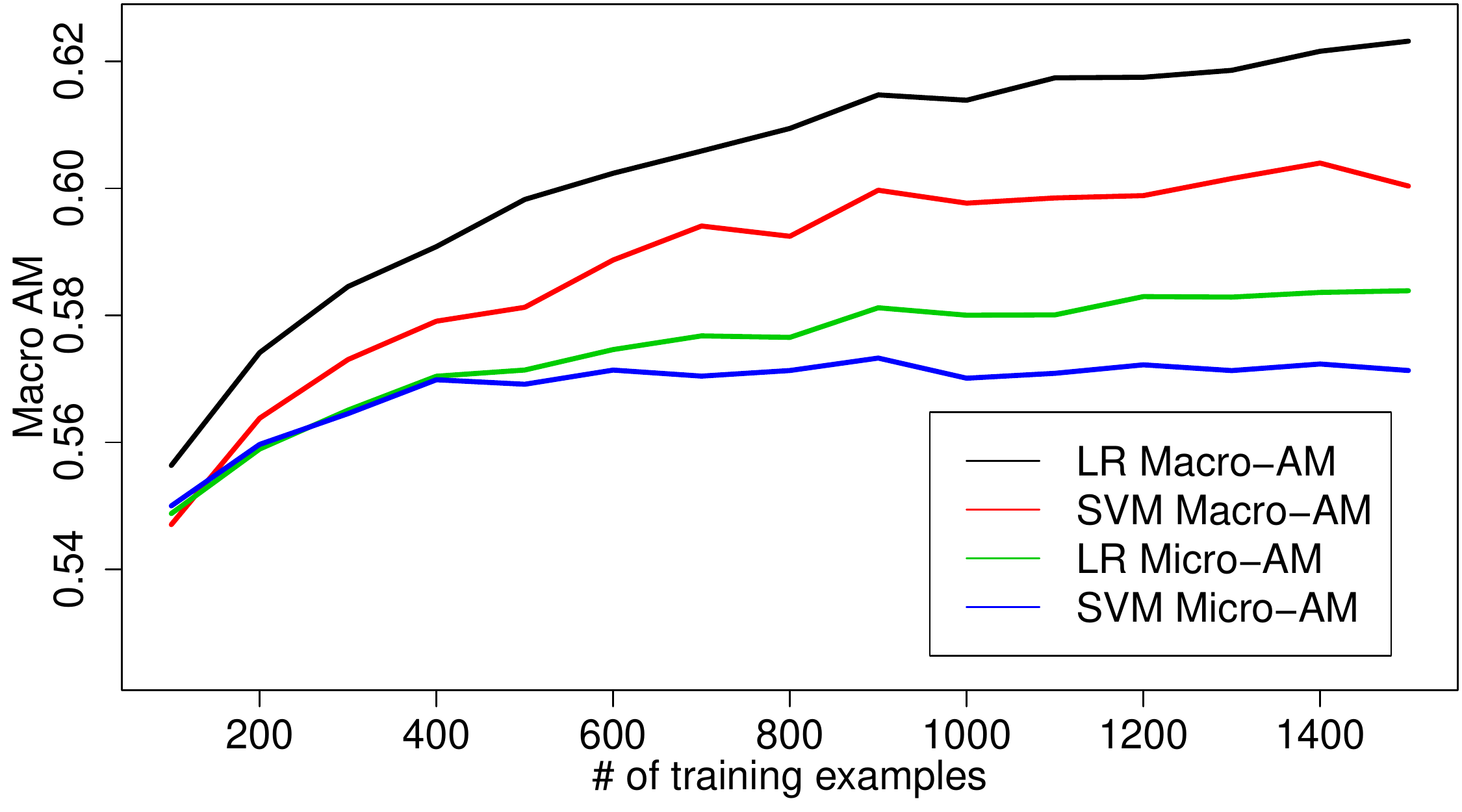} & 
\includegraphics[width = .37\textwidth]{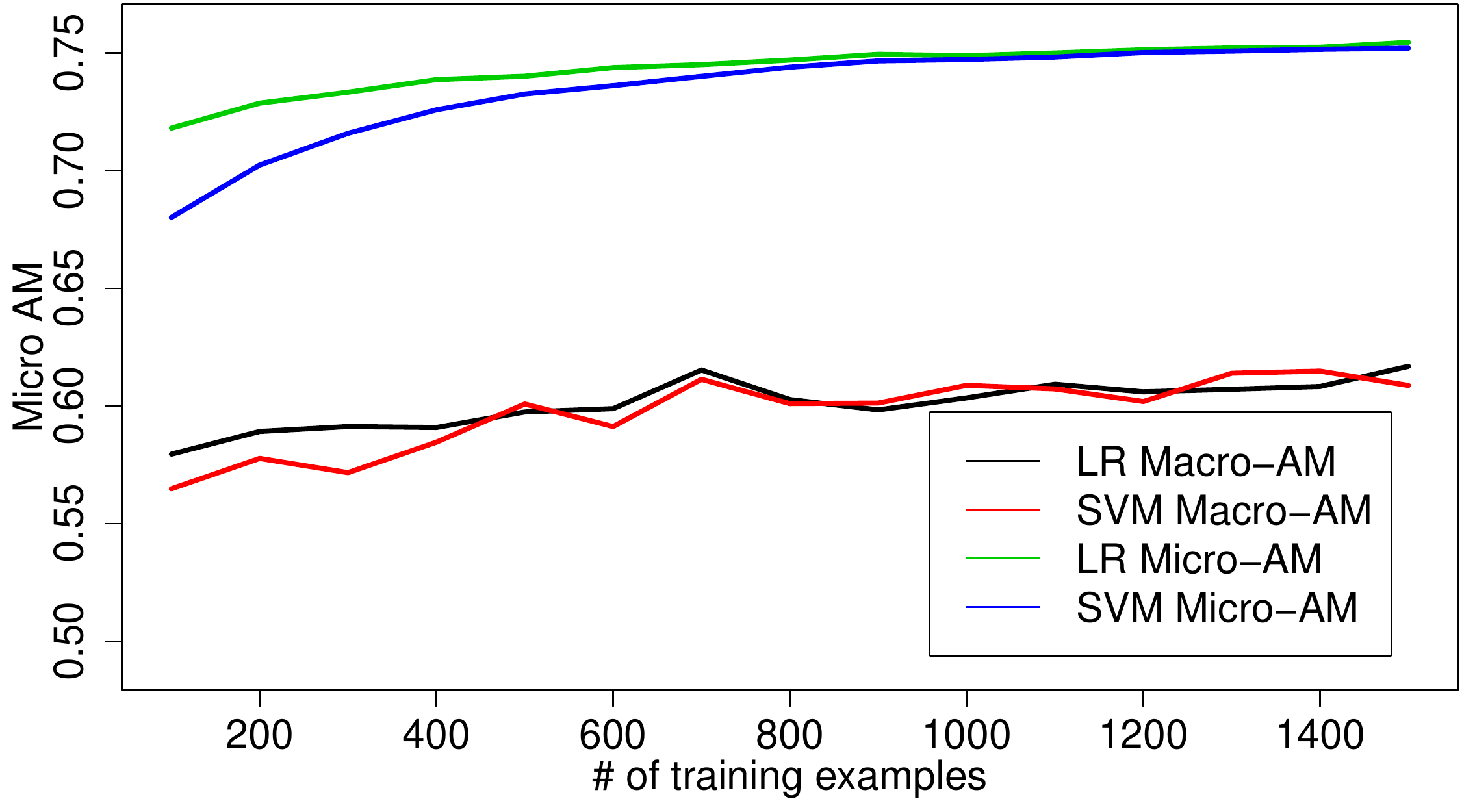} \\
\midrule
\multirow{2}{*}{\begin{sideways}\large{\texttt{mediamill}}\end{sideways}} &
\includegraphics[width = .37\textwidth]{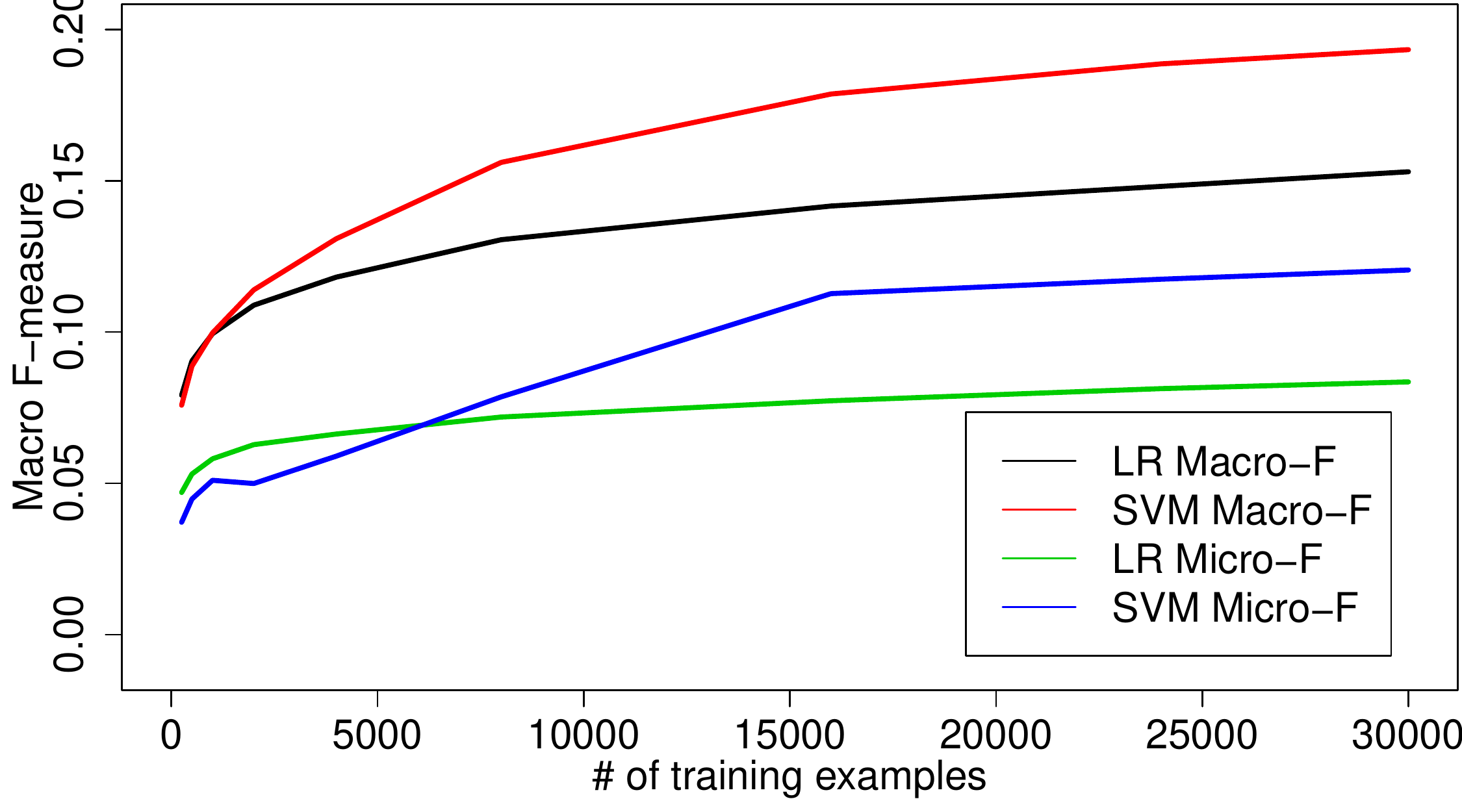} & 
\includegraphics[width = .37\textwidth]{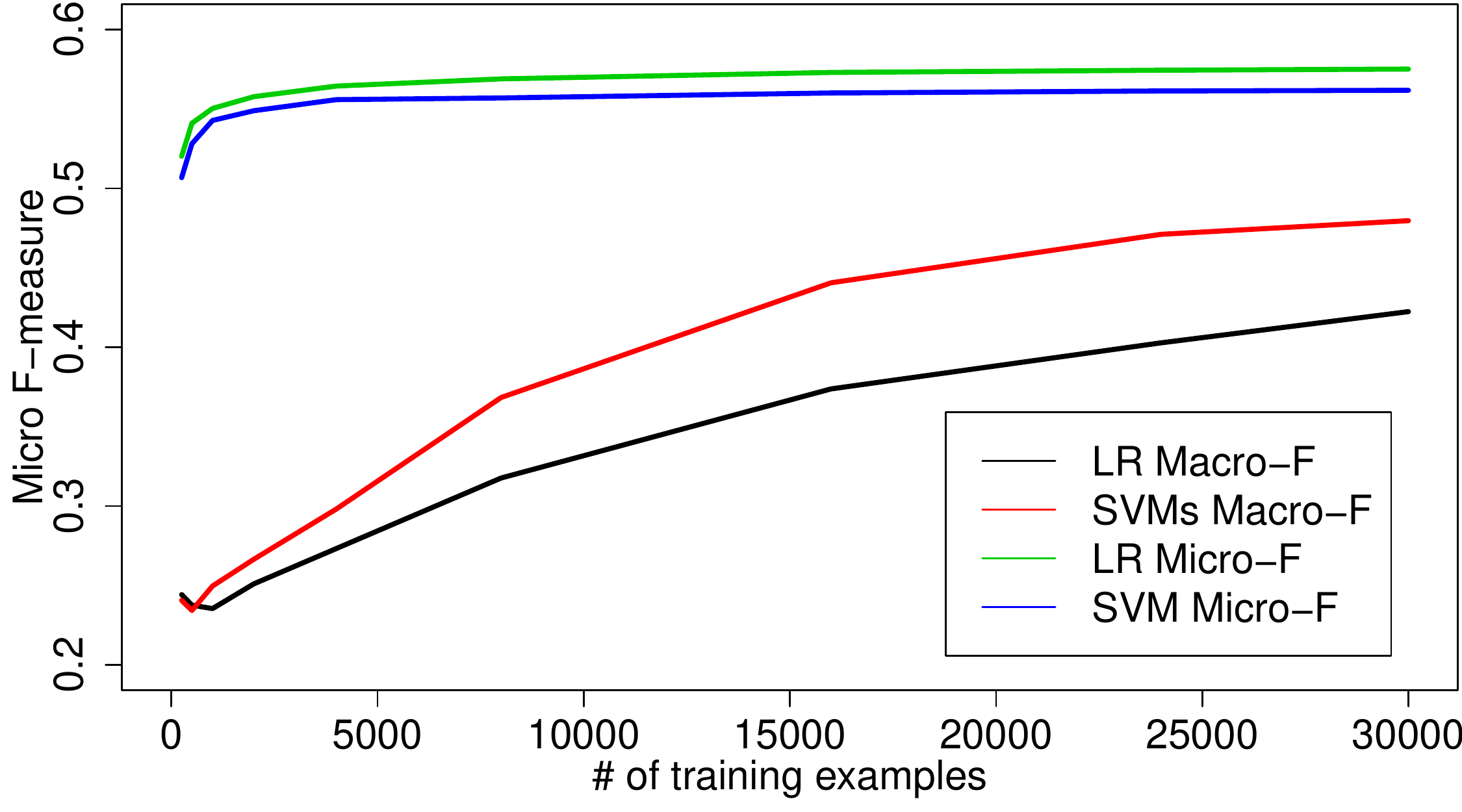} \\
& \includegraphics[width = .37\textwidth]{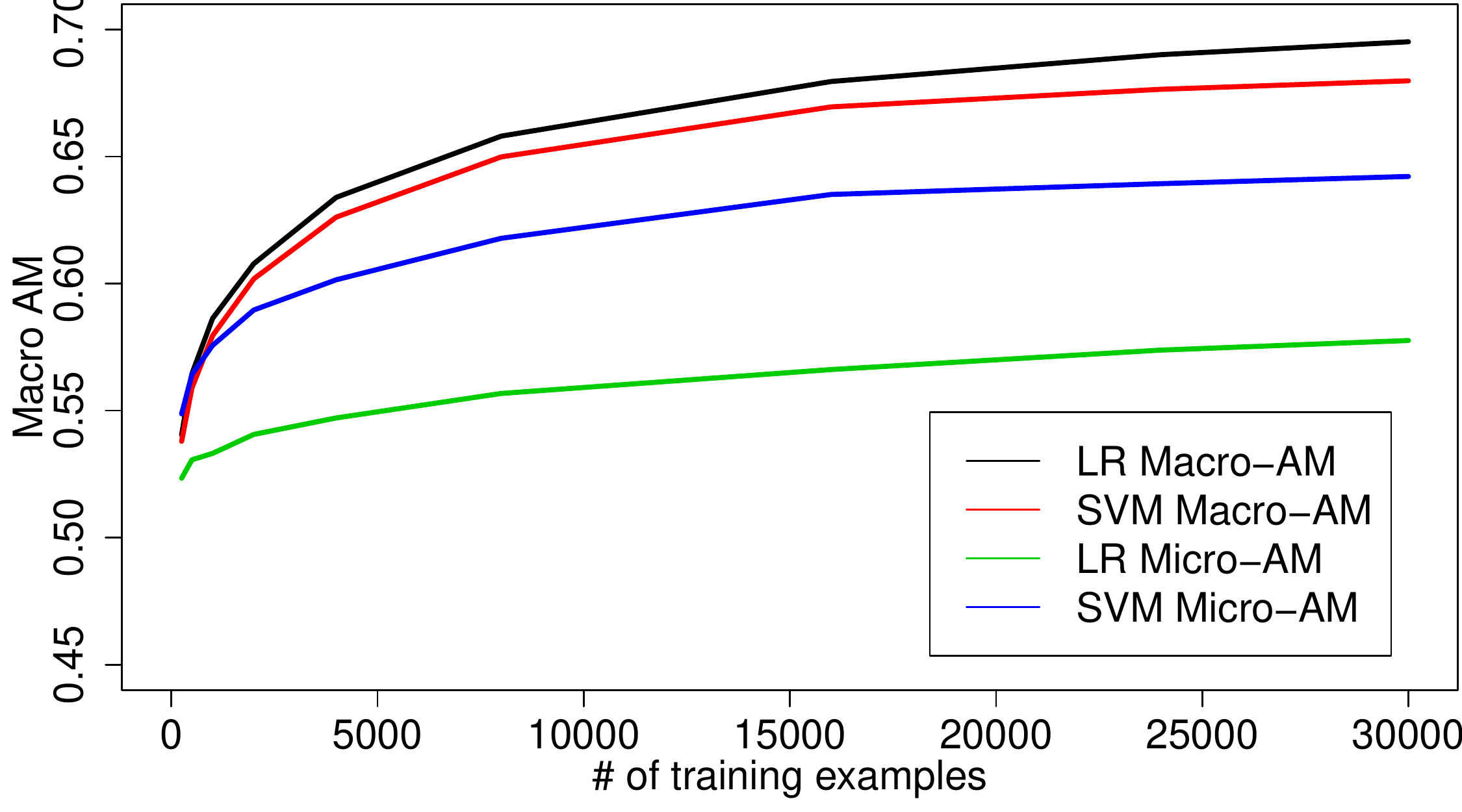} & 
\includegraphics[width = .37\textwidth]{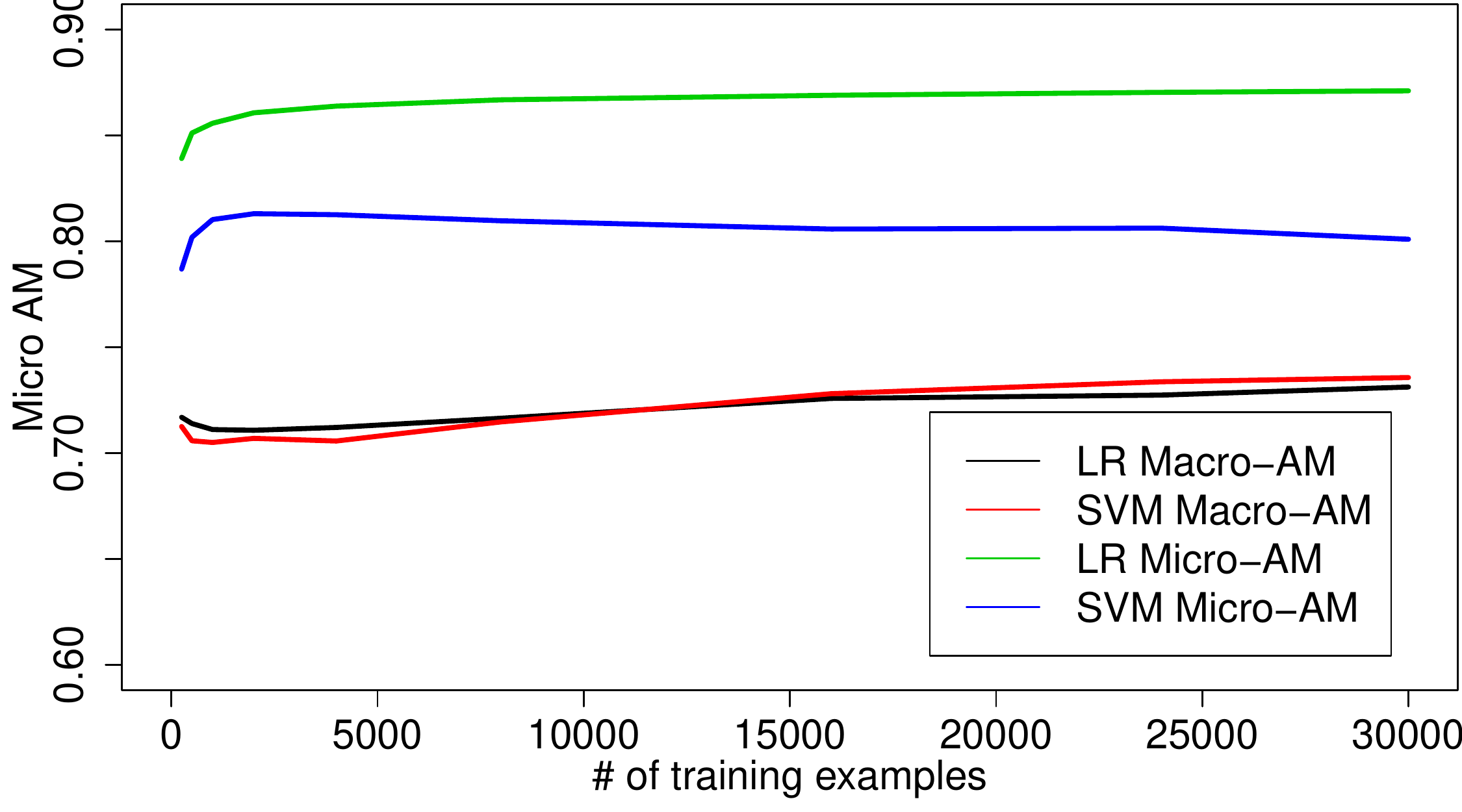} \\
\end{tabular}
\caption{Average test set performance on benchmark data sets for multi-label classification as a function of the number of training examples. Macro- and micro-averaged F-measure and AM are plotted for LR and SVM tuned for all the measures.}
\label{fig:multilabel_benchmark_results}
\end{center}
\end{figure*}
\begin{table}[t]
\begin{center}
\begin{tabular}{l r r r r}
\toprule
data set &  \# labels &  \# training examples & \# test examples & \#features \\
\midrule
\texttt{scene} & 6 & 1211 & 1169 & 294 \\ 
\texttt{yeast} & 14 & 1500 & 917 & 103 \\
\texttt{mediamill} & 101 & 30993 & 12914 & 120 \\
\bottomrule
\end{tabular}
\end{center}
\caption{Basic statistics for multi-label benchmark data sets}
\label{tbl:multilabel_data_sets}
\end{table}

In the last experiment we use three multi-label benchmark data sets.\footnote{Datasets are taken from LibSVM repository: \url{http://www.csie.ntu.edu.tw/~cjlin/libsvmtools/datasets}} Table~\ref{tbl:multilabel_data_sets} provides a summary of basic statistics of these datasets.
The aim of the experiment is to verify the theoretical results in Section \ref{sec:multilabel} on learning the micro- and macro-averaged performance metrics. We use the F-measure and the AM-measure as in previous experiments. 

The data sets are already split into the training and testing parts.
As before we train a linear model using either SVM or LR on 2/3 of training examples. The rest of training data is used for tuning the threshold. For optimizing macro-averaged measures, we tune the threshold separately for each label. This approach agrees with our analysis given in Section~\ref{sec:macro-averaging}. For micro-averaging, we tune a common threshold for all labels: we simply collect predictions for all labels and find the best threshold using these values. This approach is justified by the theoretical analysis in Section~\ref{sec:micro-averaging}. Hence, the only difference between micro- and macro-versions of the algorithms is whether a single or multiple thresholds are tuned.
 In total we use 8 algorithms: two learning algorithms (LR/SVM), two performance measures (F/AM), and two types of averaging (Macro/Micro). 
 Note that our experiments include evaluating algorithms tuned for macro-averaging in terms of micro-averaged metrics, and vice versa. The goal of such cross-analysis is to determine the impact of threshold sharing for both averaging schemes.
As before, we incrementally increase the size of the training set and repeat training and threshold tuning 20 times (we use random draws of training instances into the proper training and the validation parts; the test set is always the same, as originally specified for each data set). The results are given in Fig~\ref{fig:multilabel_benchmark_results}.

The plots generally agree with the conclusions coming from the theoretical analysis, with some intriguing exceptions, however. As expected, LR tuned for a given performance metric gets the best result with respect to that metric in most of the cases. For the \texttt{scene} data set, however, the methods tuned for the micro-averaged metrics (single threshold shared among labels) outperform the ones tuned for macro-averaged metrics (separate thresholds tuned for each label), even when evaluated in terms of macro-averaged metrics. A similar result has been obtained by~\cite{Koyejo15}. It seems that tuning a single threshold shared among all labels can lead to a more stable solution that is less prone to overfitting, even though it is not the optimal thing to do for macro-averaged measures. We further report that, interestingly, SVM outperform LR in terms  of Macro-F on \texttt{mediamill} and this is the only case in which SVM get a better result than LR. 

\section{Summary}

\label{sec:summary}
We present a theoretical analysis of a two-step approach to optimize
classification performance metrics, which first learns a real-valued function $f$ on
a training sample by minimizing a surrogate loss, 
and then tunes the threshold on $f$ by optimizing the target performance metric on
a separate validation sample. We show that if the metric is a linear-fractional function,
and the surrogate loss is strongly proper composite, then 
the regret of the resulting classifier (obtained from thresholding real-valued $f$) 
measured with respect to the target metric is upperbounded by the regret of $f$ measured with respect
to the surrogate loss. 
The proof of our result goes by an intermediate bound of the regret with respect to
the target measure by a cost-sensitive classification regret.
As a byproduct, we get a bound on the cost-sensitive classification regret 
by a surrogate regret of a real-valued function which holds simultaneously for all misclassification costs. 
We also extend our results to cover multilabel classification and provide regret bounds
for micro- and macro-averaging measures. 
Our findings are backed up in a computational study on both synthetic and real data sets.



\bibliography{kotlowski}

\end{document}